\newcommand\sign{\text{sign}} 
\newcommand\tp{\text{TP}} 
\newcommand\fp{\text{FP}} 
\newcommand\fn{\text{FN}} 
\newcommand\tn{\text{TN}} 
\newcommand\tph{\widehat{\text{TP}}}
\newcommand\fph{\widehat{\text{FP}}}
\newcommand\fnh{\widehat{\text{FN}} }
\newcommand\tnh{\widehat{\text{TN}}}
\newcommand\y{\mathbf{y}} 
\newcommand{\inner}[2]{\langle#1, #2\rangle}
\newcommand{\binner}[2]{\bigg\langle#1, #2\bigg\rangle}
\newcommand\x{\mathbf{x}} 
\newcommand\w{\mathbf{w}} 
\newcommand\h{\mathbf{h}}
\newcommand\bu{\mathbf{u}} 
\newcommand\bv{\mathbf{v}} 
\newcommand\X{\textsf{X}} 
\newcommand\Y{\textsf{Y}} 
\newcommand\Yh{\hat{\textsf{Y}}}
\newcommand\Z{\textsf{Z}}
\newcommand\W{\textsf{W}} 
\newcommand\Yhij{\hat{\textsf{Y}}_{ij}} 
\newcommand\mP{\mathcal{P}} 
\newcommand\bbP{\mathbb{P}} 
\newcommand\Yij{\textsf{Y}_{ij}}
\newcommand\E{\mathbb{E}}
\newcommand\reg{\text{Reg}} 
\newcommand\rk{\textsf{ rank}} 
\newcommand\risk{\text{Risk}}
\newtheorem{lemma}{Lemma}
\newtheorem{theorem}{Theorem}
\newtheorem{assumption}{Assumption}
\newtheorem{remark}{Remark}
\newtheorem{definition}{Definition}
\newtheorem{corollary}{Corollary}
\begin{document}

\title{Regret Bounds for Non-decomposable Metrics with Missing Labels}

\author{\name Prateek Jain \email prajain@microsoft.com 
       \AND
       \name Nagarajan Natarajan \email t-nanata@microsoft.com \\
       \addr Microsoft Research, INDIA.}


\maketitle
\begin{abstract}
We consider the problem of recommending relevant labels (items) for a given data point (user). In particular, we are interested in the practically important setting where the evaluation is with respect to non-decomposable (over labels) performance metrics like the $F_1$ measure, \emph{and} the training data has missing labels. To this end, we propose a generic framework that given a performance metric $\Psi$, can devise a regularized objective function and a threshold such that all the values in the predicted score vector above and only above the threshold are selected to be positive.  We show that the regret or generalization error in the given metric $\Psi$ is bounded ultimately by estimation error of certain underlying parameters. In particular, we derive regret bounds under three popular settings: a) collaborative filtering, b) multilabel classification, and c) PU (positive-unlabeled) learning.  For each of the above problems, we can obtain precise non-asymptotic regret bound which is small even when a large fraction of labels is missing. Our empirical results on synthetic and benchmark datasets demonstrate that by explicitly modeling for missing labels and optimizing the desired performance metric, our algorithm indeed achieves significantly better performance (like $F_1$ score) when compared to methods that do not model missing label information carefully. \end{abstract}

\begin{keywords}
  Non-decomposable losses, Regret bounds, Multi-label Learning
\end{keywords}

\section{Introduction}
Predicting relevant labels/items for a given data point is by now a standard task with applications in several domains like recommendation systems~\citep{koren2009matrix}, document tagging, image tagging~\citep{prabhu2014fastxml}, etc. Many times, like say in collaborative filtering, features for the data points might not be available and one needs to predict labels only on the basis of past labels (e.g., existing likes/dislikes for various labels/items). In presence of features, the problem is the standard multi-label classification problem. 

Design and analysis of algorithms for such tasks should counter two fundamental challenges: a) in practical scenarios, desired performance metric for our predictions are typically complex {\em non-decomposable} functions such as $F_1$ score or precision@$k$; standard metrics like Hamming loss or RMSE over the labels may not be useful, and b) any realistic system in this domain should be able to handle missing labels. Furthermore, often the location of missing labels may not be available like in the positive-unlabeled learning setting~\citep{hsieh2015pu}. Dealing with missing labels may necessitate imposition of certain regularization on the parameters like, say, low-rank regularization so as to exploit the correlations between labels. 

Most of the existing solutions only address one of the two aspects. For example, \citet{koyejo2015consistent} establish that for a large class of performance metrics, the optimal solution is to compute a score vector over all the labels and selecting all the labels whose score is greater than a constant. Their algorithm  treats each label as independent to estimate class-conditional probability separately for each label. Clearly, such methods ignore available information about other labels, and hence cannot handle missing information effectively. Also, such methods do not even apply for the collaborative filtering setting. On the other hand, most of the existing collaborative filtering/matrix completion methods only focus on decomposable losses like RMSE, sum of logistic loss \citep{lafond2015,yu2014large}, which are not effective in real-world systems with large number of labels~\citep{prabhu2014fastxml}. 

In this work, we devise a simple and generic framework that addresses both the aforementioned issues; the framework leads to simple and efficient algorithms in several different settings and for a wide variety of performance metrics used in practice including the multi-label $F$-measure. Our framework is motivated by a simple observation that has been used in other contexts as well \citep{kotlowski2015surrogate,koyejo2015consistent}: for a large class of metrics $\Psi$, simply thresholding the class probability vector leads to bayes-optimal estimators. Hence, the goal would be to estimate per-label class probabilities accurately. To this end, we show that by using a $\lambda$-strongly proper loss along with appropriate thresholding leads to bounded regret wrt. $\Psi$ (Theorem~\ref{thm:mainres}). Note that the threshold can be learned using cross-validation over a small fraction of the training data.

Moreover, $\lambda$-strong convexity of the loss function ensures that by minimizing a nuclear-norm regularized ERM (with risk measured by the selected loss function) wrt. a parameter matrix $\W\in \mathbb{R}^{d \times L}$, we can bound the regret in $\Psi$ by regret in estimation of the optimal $\W$ (Theorem~\ref{thm:mainres}); here, $d$ is the dimensionality of the data and is equal to number of users in case of recommender system. Hence, this result allows us to focus on estimation of $\W^*$ in various different settings such as: a) one-bit matrix completion (Theorem \ref{thm:lafondbased}), popularly used in recommender systems with only like/dislike information, b) one-bit matrix completion with PU learning (Theorem \ref{thm:pucorollary}) applicable to recommender systems where only ``likes" or positive feedback is observed, and c) general multi-label learning with missing labels (Theorem \ref{thm:logisticwithfeatures}). 

For one-bit matrix completion (and the related PU setting), we obtain our final regret bound by adapting existing results from \citet{lafond2015} and \citet{hsieh2015pu}, respectively. For general multilabel setting, a direct application of existing results, such as \citep{lafond2015} leads to {\em weak} bounds. A main technical contribution of our work is to analyze the parameter estimation problem in this setting and provide tight regret bounds. In fact, our result strictly generalizes the result by \citet{lafond2015}, which is for general matrix completion with exponential family noise, to the general {\em inductive matrix completion} setting~\citep{jain2013provable} with exponential family noise. Hence, it should have applications beyond our framework as well.  Finally, we illustrate our framework and algorithms on synthetic as well as real-world datasets. Our method exhibits significant improvement over a natural extension of the method by \citet{koyejo2015consistent} that optimizes $\Psi$ directly but ignores label correlations, hence does not handle missing labels in a principled manner. For example, our method achieves $12\%$ higher $F_1$-measure on a benchmark dataset than that by~\cite{koyejo2015consistent}. 

\paragraph{Related Work.} We now highlight some related theoretical work in recommender systems and multi-label learning.~\cite{gao2013consistency} study consistency and surrogate losses for two specific losses namely Hamming and expected (partial) ranking losses, and leave the other losses to future work. \citet{DembczynskiKH12} consider expected pairwise ranking loss in multilabel learning, show that the problem decomposes into independent binary problems, and provide regret bound for the same. \cite{yun2014ranking} consider the learning to rank problem, where the goal is to rank the relevant labels for a given instance. They show that  popular ranking losses like NDCG can be written as a generalization of certain robust binary loss functions, although they do not provide any explicit regret bounds. Existing theoretical guarantees for 1-bit matrix completion methods used in recommender systems focus solely on RMSE or 0-1 loss~\citep{lafond2015,hsieh2015pu}. 

\section{Problem Setup and Background}
Let $\x_i \in \mathcal{X} \subseteq \mathbb{R}^d$ denote instances and $\y_i \in \{0,1\}^{L}$ denote label vectors. Let $\Y \in \{0,1\}^{n \times L}$ denote the label matrix, with $\y_i$'s as rows.  
In typical multi-label learning and recommender system settings a) the labeling process has some inherent uncertainty, which is usually captured by assuming a conditional distribution $\bbP(\y_{i}|\x_{i})$, b) furthermore, we do not get to observe all the entries of $\y_i$, but only a small subset, say $\Omega_i$. Formally, let $\Omega \subset [n] \times [L]$ denote a subset of indices sampled i.i.d. from a fixed distribution $\pi$ over $[n] \times [L]$. We consider the following sampling model for observing label matrix $\Y$: \\
\begin{equation}
\Y_{ij} = \begin{cases}1 &\mbox{ with probability  } g_j(\x_i; \W^*) \\ 
0 & \mbox{ with probability } 1- g_j(\x_i; \W^*) \end{cases} \text{ for }(i,j) \in \Omega.
\label{eqn:samplingmodel}
\end{equation}
where $\W^*$ parameterizes the underlying conditional distribution $\bbP(\y_i|\x_i)$. Following the low-rank \emph{inductive} matrix completion model~\citep{yu2014large,zhong2015efficient}, we let $\W^{*} \in \mathbb{R}^{d \times L}$ be the parameter matrix and $g_j(\x_i; \W^*) = g(\inner{\x_i}{\w^*_j})$ where $\w^*_j$ is the $j$th column of $\W^*$ corresponding to the $j$th label, for some differentiable function $g: \mathbb{R} \to [0,1]$. A popular choice of $g$ is given by $g(\inner{\x_i}{\w_j}) = \frac{\exp(\inner{\x_i}{\w_j})}{1+\exp(\inner{\x_i}{\w_j})}$, which corresponds to the logistic regression model. When we do not observe feature vectors $\x$, as in the classical recommender system or matrix completion setting, the above model \eqref{eqn:samplingmodel} reduces to the widely studied 1-bit matrix completion model~\citep{cai2013max,davenport20141}:
\begin{equation}
\Y_{ij} = \begin{cases}1 &\mbox{ with probability  } g(\W^*_{ij}) \\ 
0 & \mbox{ with probability } 1- g(\W^*_{ij}) \end{cases} \text{ for }(i,j) \in \Omega,
\label{eqn:samplingmodelmatcomp}
\end{equation}
where $\W^* \in \mathbb{R}^{n \times L}$ is the parameter matrix that captures user-item preferences.

The goal is to learn a multi-label classifier $f: \mathcal{X}^{n} \to \{0,1\}^{n \times L}$ jointly over $n$ instances. The training data consists of input features $\X \in \mathbb{R}^{n \times d}$ where each row corresponds to an instance, drawn iid from some distribution $\mathbb{P}_{\mathcal{X}}$ over $\mathcal{X}$, and \emph{partially observed} label matrix $\Y$ using the sampling model \eqref{eqn:samplingmodel} or \eqref{eqn:samplingmodelmatcomp}, such that a performance metric of interest $\Psi$ is maximized. In this work, we consider a large family of non-decomposable metrics~\citep{koyejo2015consistent} that constitutes linear-fractional functions of (multi-label analogues of) true positives, false positives, false negatives and true negatives defined below. Let $\Yh \in \{0,1\}^{n\times L}$ denote the predicted labels, i.e. $\Yh = f(\X)$ for some $f$. Define the primitives:
\begin{eqnarray*}
\tph_{ij}(\Yh, \Y) = [[\Yhij = 1, \Yij = 1 ]], & \fph_{ij}(\Yh, \Y) = [[\Yhij = 1, \Yij = 0]],\\
\tnh_{ij}(\Yh, \Y) = [[\Yhij = 0, \Yij = 0]], & \fnh_{ij}(\Yh, \Y) = [[\Yhij = 0, \Yij = 1]].
\end{eqnarray*}
For convenience, we drop the arguments and just write $\tph_{ij}$ to denote $\tph_{ij}(\Yh, \Y)$ and so on. 

1. \textbf{Micro-averaged metrics}. Define: \[ \tph(\Yh, \Y) =\frac{1}{|\Omega|}\sum_{(i,j)\in\Omega}\tph_{ij}, \] and $\fph(\Yh, \Y), \tnh(\Yh, \Y), \fnh(\Yh, \Y)$ similarly. Let $\tp = \E[\tph], \fp = \E[\fph]$ (and so on), where the expectation is defined wrt to the sampling distribution $\pi$ over indices $[n] \times [L]$ as well as the joint distribution over instances and labels. 
Micro-averaged performance metric $\Psi: \{0,1\}^{n, L} \times \{0,1\}^{n, L} \to \mathbb{R}_+$ is given by:
\begin{equation}
\label{eqn:family1}
\Psi(\Yh, \Y) = \frac{a_0 + a_{11} \tp + a_{01} \fp + a_{10} \fn + a_{00} \tn}{b_0 + b_{11} \tp + b_{01} \fp + b_{10} \fn + b_{00} \tn} .
\end{equation}
for bounded constants $a$'s and $b$'s. Assume that $\Psi$ is bounded, i.e. $\exists \gamma > 0$ such that $b_0 + b_{11} \tp + b_{01} \fp + b_{10} \fn + b_{00} \tn > \gamma$ for all $\Yh, \Y$.\\
2. \textbf{Instance-averaged metrics}. 
Define \[\tph_i(\Yh, \Y) = \frac{1}{|\Omega_i|}\sum_{j \in \Omega_i} \tph_{ij}.\] Let $\tp_i = \E[\tph_i]$.  Instance-averaged performance metric $\Psi$ is given by:
\begin{equation}
\label{eqn:family2}
\Psi(\Yh, \Y) = \frac{1}{n}\sum_{i=1}^n \frac{a_0 + a_{11} \tp_i + a_{01} \fp_i + a_{10} \fn_i + a_{00} \tn_i}{b_0 + b_{11} \tp_i + b_{01} \fp_i + b_{10} \fn_i + b_{00} \tn_i} .
\end{equation}
for bounded constants $a$'s and $b$'s. Assume that $\Psi$ is bounded, i.e. $\exists \gamma > 0$ such that $b_0 + b_{11} \tp_i + b_{01} \fp_i + b_{10} \fn_i + b_{00} \tn_i > \gamma$ for all $\Yh, \Y, i$.\\
3. \textbf{Macro-averaged metrics}. 
Let $\Omega^{(j)} = \{ i: (i,j) \in \Omega\}$. Define: \[\tph_j(\Yh, \Y) = \frac{1}{|\Omega^{(j)}|} \sum_{i \in \Omega^{(j)}}\tph_{ij}.\] Let $\tp_j = \E[\tph_j]$. Macro-averaged performance metric $\Psi$ is given by:
\begin{equation}
\label{eqn:family3}
\Psi(\Yh, \Y) = \frac{1}{L}\sum_{j=1}^L \frac{a_0 + a_{11} \tp_j + a_{01} \fp_j + a_{10} \fn_j + a_{00} \tn_j}{b_0 + b_{11} \tp_j + b_{01} \fp_j + b_{10} \fn_j + b_{00} \tn_j} .
\end{equation}
for bounded constants $a$'s and $b$'s. Assume that $\Psi$ is bounded, i.e. $\exists \gamma > 0$ such that $b_0 + b_{11} \tp_j + b_{01} \fp_j + b_{10} \fn_j + b_{00} \tn_j > \gamma$ for all $\Yh, \Y, j$.

\textbf{Example metrics}:
\begin{enumerate}
\item Instance-averaged $F_1$ metric defined as:
$\Psi_{F_1}(\Yh, \Y) = \frac{1}{n} \sum_{i=1}^n \frac{2 \tp_i}{2\tp_i + \fp_I + \fn_i}$.
\item Accuracy (equivalent to the Hamming loss):
$\Psi_{\text{Ham}}(\Yh, \Y) = 1 - \frac{1}{n} \sum_{i=1}^n {\fp_i + \fn_i}$.
\end{enumerate}
\begin{remark}
The aforementioned definitions of performance metrics naturally apply to the recommender system setting, where data is observed via the 1-bit matrix completion sampling model \eqref{eqn:samplingmodelmatcomp}. Here, the recovery error is ultimately measured wrt to an estimated binary-valued matrix. Note that in this case, the expectations are defined wrt the sampling distribution $\pi$ and the inherent noise in 1-bit sampling $\bbP(\Y_{ij}|\W_{ij})$. 
\end{remark}

Let $\Psi^*$ denote the Bayes optimal performance, i.e. $\Psi^* = \max_f \Psi(f(\X), \Y)$ (Note that $\Psi$ is defined in terms of expectation with respect to the underlying distribution). Our objective can be now stated learning $\hat{f}$ such that the $\Psi$-regret, i.e. $\Psi^* - \Psi(\hat{f}(\X), \Y)$, is provably bounded.~\cite{koyejo2015consistent} showed that the Bayes optimal $\Psi^{*}$ thresholds the conditional probability of each label $j$, i.e. $\bbP(y_{j}|\x)$ at a certain value $\delta^{*} \in (0,1)$, and that the value $\delta^{*}$ is shared across all the labels.\footnote{The definitions in ~\citep{koyejo2015consistent} do not include general sampling distribution $\pi$, but the results can be generalized in a straight-forward manner.}:

\section{Algorithm}
Our approach is based on estimating real-valued predictions and then thresholding the predictions optimally in order to maximize a given metric $\Psi$. ~\cite{koyejo2015consistent} proposed a simple consistent plug-in estimator algorithm, which first computes conditional marginals $\bbP(y_{j}| \x)$ independently for each label $j$, and then estimates a threshold jointly to optimize $\Psi$. While the approach is provably consistent asymptotically, it is not clear if it admits a useful regret bound; in particular, we would like to characterize the behavior in the finite samples regime. In case of the sampling model \eqref{eqn:samplingmodel}, the approach translates to learning columns of the parameter matrix $\W$ independently. In many cases, $\W$ exhibits some structure, such as low-rankness, reflecting correlation between labels~\citep{yu2014large,zhong2015efficient,davenport20141}. Statistically, capturing correlations via a low-rank structure could help improve the sample complexity for recovery, and computationally, it would help reduce space and time complexity of the learning procedure. 

Our proposed algorithm is presented in Algorithm \ref{algo:algo1}. In Step 1, we solve a trace-regularized minimization problem to estimate the parameter matrix $\W$, where the function $\ell$ can be any bounded loss such as the squared, the logistic or the squared Hinge loss. In particular, using the logistic loss corresponds to the maximum likelihood estimation of the sampling model \eqref{eqn:samplingmodel}.~\citet{yu2014large} also solve essentially the same objective as \eqref{eqn:objective},  except for the additional bound constraint on entries of $\X\W$. The optimization problem \eqref{eqn:objective} can be solved using a proximal gradient descent algorithm, with a fast proximal operator computation by storing the current solution in a low-rank form. We could also use fast non-convex procedure, by writing $\W = \W_1\W_2^T$, where $\W_1$ and $\W_2$ are low-rank matrices with $k \ll \min(d, L)$ columns each, and applying alternating minimization.

The real-valued estimator is given by $\Z = \X\hat{\W}$ in Step 2. To obtain binary-valued predictions, we solve a 1-dimensional optimization problem to compute the optimal threshold, on the training data. Note that this step can be done in $|\Omega|$ time. 
\begin{remark} In the 1-bit matrix completion setting, we obtain a thresholded max-likelihood estimator of $\W^* \in \mathbb{R}^{n \times L}$ using identical procedure; where we interpret $\X$ in Algorithm \ref{algo:algo1} as the identity matrix of size $n$.\end{remark}

\begin{algorithm}
\caption{Thresholded Max-Likelihood Estimator}
\label{algo:algo1}
\begin{algorithmic}
\STATE \textbf{Input}: Training data $\X \in \mathbb{R}^{n \times d}$, labels $\Y_{\Omega} \in \{0,1\}^{n \times L}$ observed on indices $\Omega$ and metric $\Psi$.
\STATE 1. Obtain $\hat{\W}$ by solving the trace-constrained matrix completion:
\begin{equation}
\hat{\W} = \arg \min_{\W: \|\X\W\|_\infty \leq \gamma} \frac{1}{|\Omega|} \sum_{(i,j) \in \Omega} \ell (\inner{\x_{i}}{\w_{j}}, \Y_{ij}) + \lambda \|\W\|_{*},
\label{eqn:objective}
\end{equation}
\STATE 2. Let $\Z = \X\hat{\W}$. Define the thresholding operator $\Yh = \text{Thr}_\theta(\Z)$, such that $\Yhij = [\![ \Z_{ij} \geq \theta ]\!]$. 
\STATE 3. Return $\Yh =  \text{Thr}_{\hat{\theta}}(\Z)$, where 
\[ \hat{\theta} = \arg \max_{\theta} \Psi(\text{Thr}_\theta (\Z_{\Omega}), \Y_{\Omega}). \]
\end{algorithmic}
\end{algorithm}

\section{Analysis: Regret Bounds}
In this Section, we first show that $\Psi$-regret can be bounded with the regret of a certain loss $\ell$. Then, under various sampling models pertaining to different settings such as 1-bit matrix completion,  multi-label learning, and PU (positive-unlabeled) learning, we show that the $\ell$-regret can be bounded, via recovering the underlying parameter matrix governing $\bbP(y_{ij}|\x_i)$.
\subsection{Low $\ell$-regret implies low $\Psi$-regret}
Our first main result connects $\Psi$-regret to regret with respect to a strongly proper loss function $\ell$~\citep{agarwal2014surrogate}. Canonical examples of strongly proper losses include the logistic loss $\ell(t, y) = \log(1 + \exp(-yt))$,  the exponential loss $\ell(t, y) = \exp(-yt)$ and the squared loss $\ell(t,y) = (1 - yt)^2$. Define the $\ell$-regret of $\Z \in \mathbb{R}^{n \times L}$ as:
\[ \reg_{\ell}(\Z) = \E[\ell(\Z_{ij}, \Yij)] - \min_{\Z' \in \mathbb{R}^{n\times L}} \E[\ell(\Z'_{ij}, \Yij)], \]
where the expectation is wrt. draws from $\pi$ and the joint distribution over instances and labels.

\begin{theorem} [Main Result 1]
\label{thm:mainres}
Let $\Psi$ be a performance metric as defined in \eqref{eqn:family1} , \eqref{eqn:family2} or \eqref{eqn:family3}. Let $\ell$ be a $\lambda$-strongly proper loss function. Assume the input $\X \in \mathbb{R}^{n \times L}$ consists of iid instances sampled from marginal $\bbP_{\mathcal{\X}}$, label matrix $\Y \in \{0,1\}^{n\times L}$, where $y_{ij}$ is sampled iid from $\bbP(y_{ij}|\x_i)$, which is observed only on a subset of indices $\Omega$ sampled iid from a fixed distribution $\pi$. Then, the output $\Yh$ obtained by thresholding the estimate $\Z$ in Step 3 of Algorithm \ref{algo:algo1} satisfies the regret bound:
\begin{equation}
\Psi^* - \Psi(\Yh, \Y) \leq C \sqrt{\frac{2}{\lambda}}\sqrt{\emph{\reg}_\ell(\Z)} + O\bigg(\frac{1}{\sqrt{|\Omega|}}\bigg), 
\label{eqn:mainres}
\end{equation}
for some absolute constants $C$ and $\lambda$.
\end{theorem}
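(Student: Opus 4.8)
The plan is to chain three reductions: from $\Psi$-regret to the regret of optimally thresholding the population score matrix $\Z$, from there to the mean squared probability-estimation error of $\Z$, and from there to $\reg_\ell(\Z)$ via strong properness; a uniform-convergence argument over thresholds contributes the $O(1/\sqrt{|\Omega|})$ term. Concretely, by the characterization of \citet{koyejo2015consistent} (extended to a general index distribution $\pi$, as noted in the text) each metric in \eqref{eqn:family1}--\eqref{eqn:family3} is maximized by thresholding $\eta_{ij}:=\bbP(y_{ij}=1\mid\x_i)$ at a value $\delta^*$ (shared across labels and instances for the families considered), so $\Psi^* = \Psi(\text{Thr}_{\delta^*}(\eta),\Y)$. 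Since $\ell$ is $\lambda$-strongly proper it is a composite loss with a strictly increasing link $\psi$; writing $\hat\eta_{ij}:=\psi^{-1}(\Z_{ij})$, the defining inequality of $\lambda$-strong properness \citep{agarwal2014surrogate} gives $\E_\pi[(\eta_{ij}-\hat\eta_{ij})^2]\le \tfrac{2}{\lambda}\reg_\ell(\Z)$, and monotonicity of $\psi^{-1}$ means thresholding $\Z$ at $\theta$ coincides with thresholding $\hat\eta$ at $\psi^{-1}(\theta)$, hence $\max_\theta \Psi(\text{Thr}_\theta(\Z),\Y) = \max_\delta \Psi(\text{Thr}_\delta(\hat\eta),\Y)$.

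\emph{The crux} is the transfer step bounding $\Psi^* - \max_\delta\Psi(\text{Thr}_\delta(\hat\eta),\Y)$ by the probability-estimation error; this is, in spirit, the plug-in / surrogate-regret analysis of \citet{koyejo2015consistent,kotlowski2015surrogate} adapted to our sampling model, and I would re-derive it as follows. For any classifier $\Yh$ whose entries depend only on the corresponding $\x_i$, one has $\tp(\Yh)=\E_\pi[\Yhij\eta_{ij}]$, $\fp(\Yh)=\E_\pi[\Yhij(1-\eta_{ij})]$, while $\tp+\fn=\E_\pi[\eta_{ij}]$ and $\fp+\tn=\E_\pi[1-\eta_{ij}]$ are fixed; substituting, each metric becomes a fractional-linear function of $(\tp,\fp)$ that, on the bounded feasible box and using the assumption that its denominator exceeds $\gamma$, is $C$-Lipschitz for a constant $C$ depending only on the $a$'s, $b$'s and $\gamma$. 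It therefore suffices to produce a threshold whose confusion pair is $O(C\,\E_\pi|\eta_{ij}-\hat\eta_{ij}|)$-close to the Bayes pair $(\tp^*,\fp^*)$. Letting $m^*=\bbP_\pi(\eta_{ij}\ge\delta^*)$ and choosing $\delta$ with $\bbP_\pi(\hat\eta_{ij}\ge\delta)=m^*$, the set $\{\hat\eta\ge\delta\}$ is the $\pi$-mass-$m^*$ set maximizing $A\mapsto\E_\pi[\mathbf{1}_A\hat\eta]$, while $\{\eta\ge\delta^*\}$ is the $\pi$-mass-$m^*$ set maximizing $A\mapsto\E_\pi[\mathbf{1}_A\eta]$; the first fact (after swapping $\hat\eta\leftrightarrow\eta$ inside the expectations, each swap costing at most $\E_\pi|\eta-\hat\eta|$) gives $\tp(\text{Thr}_\delta(\hat\eta))\ge\tp^*-2\E_\pi|\eta-\hat\eta|$, and the second gives $\tp(\text{Thr}_\delta(\hat\eta))\le\tp^*$ outright; since both predicting sets have $\pi$-mass $m^*$ we also get $|\fp-\fp^*|=|\tp-\tp^*|$, so Lipschitzness of $\Psi$ yields $\Psi^*-\max_\delta\Psi(\text{Thr}_\delta(\hat\eta),\Y)\le C\,\E_\pi|\eta_{ij}-\hat\eta_{ij}|\le C\sqrt{\E_\pi[(\eta_{ij}-\hat\eta_{ij})^2]}$ by Jensen (absorbing universal factors into $C$). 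Atoms of $\hat\eta$ are handled by allowing a randomized threshold to match $m^*$ exactly and observing that $\Psi$ is monotone along the connecting segment, so the better of the two bracketing deterministic thresholds does at least as well; the instance- and macro-averaged families \eqref{eqn:family2}--\eqref{eqn:family3} are treated by running this argument within each block with the shared threshold $\delta^*$ and then averaging. I expect this last point — the per-block bookkeeping for the averaged families, together with the measure-matching step — to be the most delicate part of the proof.

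\emph{Finally}, the empirical threshold: the class $\{\text{Thr}_\theta(\Z)\}_{\theta\in\R}$ has VC dimension one, and $\tph,\fph,\fnh,\tnh$ are averages of $[0,1]$-valued quantities over the $|\Omega|$ i.i.d.\ draws (indices from $\pi$ together with the corresponding label draws), so a standard uniform-convergence bound gives $\sup_\theta\big|\Psi(\text{Thr}_\theta(\Z_\Omega),\Y_\Omega)-\Psi(\text{Thr}_\theta(\Z),\Y)\big| = O(1/\sqrt{|\Omega|})$ with high probability, once more using the $\gamma$-lower bound to pass from the primitives to $\Psi$. Optimality of $\hat\theta$ for the empirical metric then gives $\Psi(\Yh,\Y)=\Psi(\text{Thr}_{\hat\theta}(\Z),\Y)\ge \max_\theta\Psi(\text{Thr}_\theta(\Z),\Y)-O(1/\sqrt{|\Omega|})$, and combining this with $\max_\theta\Psi(\text{Thr}_\theta(\Z),\Y)=\max_\delta\Psi(\text{Thr}_\delta(\hat\eta),\Y)$, the transfer bound, and $\E_\pi[(\eta_{ij}-\hat\eta_{ij})^2]\le\tfrac{2}{\lambda}\reg_\ell(\Z)$ yields $\Psi^*-\Psi(\Yh,\Y)\le C\sqrt{2/\lambda}\sqrt{\reg_\ell(\Z)}+O(1/\sqrt{|\Omega|})$, which is the claimed bound.
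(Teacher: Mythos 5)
Your proposal reaches the right conclusion and shares the paper's outer skeleton (reduce $\Psi$-regret to probability-estimation error, invoke $\lambda$-strong properness, pay $O(1/\sqrt{|\Omega|})$ for estimating the threshold empirically), but the central reduction is executed by a genuinely different argument. The paper's Lemma~\ref{lem:weightedriskequiv} passes through cost-sensitive classification: it shows $\Psi^*-\Psi(f(\X),\Y)\le C\,\reg_\alpha(f(\X))$ for the $\alpha$-weighted $0$-$1$ loss with $\alpha=(\Psi^*c_2-c_1)/(\Psi^*c_2-c_1+\Psi^*d_2-d_1)$, and then Lemma~\ref{lem:thresregretbound} bounds the conditional $\ell_\alpha$-regret of the single threshold $\theta^*=\xi(\alpha)$ pointwise by $\sqrt{2/\lambda}\sqrt{\reg^{L}_{\ell}}$ using the surrogate-regret bound of \citet{kotlowski2015surrogate}, finishing with Jensen. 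You instead work directly in confusion-matrix space: Lipschitzness of the fractional-linear metric in $(\tp,\fp)$ plus a bathtub/mass-matching argument producing a threshold on $\hat\eta$ whose $(\tp,\fp)$ lies within $O(\E|\eta-\hat\eta|)$ of the Bayes pair. Both are sound for the micro-averaged family \eqref{eqn:family1} (and for \eqref{eqn:family2}, which collapses to it by exchangeability, as both you and the paper note); the paper's route buys a closed form for the optimal threshold and reuses an off-the-shelf lemma, while yours is more self-contained and makes the geometry explicit at the cost of the atom/randomization bookkeeping.

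The one place your route has a concrete gap is the macro-averaged family \eqref{eqn:family3}. Mass matching is inherently per-distribution: choosing $\delta$ so that $\bbP_\pi(\hat\eta\ge\delta)=\bbP_\pi(\eta\ge\delta^*)$ globally does not equalize the per-label masses, so the identity $|\fp_j-\fp_j^*|=|\tp_j-\tp_j^*|$ that your argument relies on fails label by label, and a single shared threshold cannot simultaneously mass-match $L$ distinct conditional distributions. You flag this as the delicate part but do not resolve it. The cost-sensitive reduction sidesteps the issue because the $\alpha$-weighted risk decomposes additively over labels with the same $\alpha$ in every summand, so the pointwise regret bound applies entry by entry regardless of how the per-label marginals differ; to close your gap you would need to switch to that comparison (or some other per-label argument not requiring mass matching) for family \eqref{eqn:family3}. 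The remaining steps --- the strong-properness inequality $\E[(\eta-\hat\eta)^2]\le\frac{2}{\lambda}\reg_\ell(\Z)$ and the uniform-convergence treatment of $\hat\theta$ over the VC-dimension-one class of thresholds --- match the substance of the paper's Lemmas~\ref{lem:thresregretbound} and~\ref{lem:estimatetheta}.
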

We emphasize that the above result holds for arbitrary metric $\Psi$ from the family \eqref{eqn:family1}, \eqref{eqn:family2} or \eqref{eqn:family3}. Consider the RHS of \eqref{eqn:mainres}: $1/\sqrt{|\Omega|}$ is the lower-order term, and independent of dimensionality; the first term makes the framework fairly powerful, as it can use any strongly proper loss. In the next subsection, we will provide precise instantiations of this term under various learning settings.
\paragraph{Proof Outline for Theorem \ref{thm:mainres}.}  Proof technique is based on \citep{kotlowski2015surrogate}, where they derive similar bound in the binary classification setting. We first relate the $\Psi$-regret to weighted 0-1 loss regret (Lemma \ref{lem:weightedriskequiv}). Then, we show there exists a thresholding $\text{Thr}_{\theta^*}(\Z) \in \{0,1\}^{n\times L}$ such that its weighted loss regret is bounded by the $\ell$-regret of a strongly proper loss $\ell$ (Lemma \ref{lem:thresregretbound}). Finally, we argue that it suffices to estimate $\hat{\theta}$ from the training data (Lemma \ref{lem:estimatetheta}). Detailed proof and associated Lemmas are available in Appendix \ref{app:thm1}. \qed \\
\subsection{Bounding $\ell$-regret}
Below, we provide the desired $\ell$-regret bound under three different settings.
\subsubsection{Collaborative Filtering}
Consider the 1-bit matrix completion sampling model in~\eqref{eqn:samplingmodelmatcomp}. Then \eqref{eqn:objective} reduces to the optimization problem considered by \citet{lafond2015}. We have the following regret bound for the estimator $\Z = \hat{\W}$ obtained in Step 2 of Algorithm \ref{algo:algo1} (Note that $\X$ is just treated as identity in this setting). 
\begin{theorem}
Assume $\pi$ is uniform, and consider the 1-bit matrix completion sampling model \eqref{eqn:samplingmodelmatcomp}. Let $\ell$ denote a $1$-Lipschitz, strongly proper loss (appearing in \eqref{eqn:mainres}), and $\Z$ denote the output of Step 2 of Algorithm \ref{algo:algo1}. With probability at least $1 - \delta$, the following holds:
\[ \emph{\reg}_{\ell}(\Z) \leq \sqrt{\tilde{C}\max\bigg(\frac{\max(n,L) \rk(\W^*)\log (3/\delta)}{|\Omega|} \bigg(\sigma_\gamma^2 +1 \bigg), \gamma^2\sqrt{\frac{\log(3/\delta)}{|\Omega|}}\bigg)}, \]
where $\tilde{C}, c_\gamma, c'_\gamma, c''_\gamma, \sigma_\gamma$ are numerical constants, and $\gamma = \max_{ij} |\W_{ij}^*|$.
\label{thm:lafondbased}
\end{theorem}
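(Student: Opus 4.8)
}
The plan is to split the argument into a deterministic reduction followed by an invocation (and mild adaptation) of the trace-norm matrix-completion analysis of \citet{lafond2015}. First, I would observe that because $\ell$ is (strongly) proper with inverse link $g$, the population risk $\Z' \mapsto \E[\ell(\Z'_{ij},\Y_{ij})]$ is minimized at $\Z'=\W^*$: conditioned on a sampled index $(i,j)$ we have $\Y_{ij}\sim\mathrm{Bernoulli}(g(\W^*_{ij}))$, and properness forces the conditional minimizer to be the point whose link equals $g(\W^*_{ij})$, i.e.\ $\W^*_{ij}$ itself (this is the canonical logistic-loss / logistic-link instantiation used in Algorithm~\ref{algo:algo1}). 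Hence $\reg_\ell(\Z) = \E[\ell(\hat\W_{ij},\Y_{ij})] - \E[\ell(\W^*_{ij},\Y_{ij})]$, and $1$-Lipschitzness of $\ell$ in its first argument gives $\reg_\ell(\Z)\le \E[|\hat\W_{ij}-\W^*_{ij}|]$. Since $\pi$ is uniform, Jensen/Cauchy--Schwarz then yields $\reg_\ell(\Z)\le \sqrt{\E[(\hat\W_{ij}-\W^*_{ij})^2]} = \tfrac{1}{\sqrt{nL}}\,\|\hat\W-\W^*\|_F$. This step already accounts for the outer square root and the $1/\sqrt{nL}$-type normalization in the stated bound, so everything reduces to controlling $\tfrac{1}{nL}\|\hat\W-\W^*\|_F^2$.

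Second, I would bound this normalized Frobenius error via the standard template for trace-norm penalized $M$-estimation. The estimator $\hat\W$ minimizes $\hat R(\W) + \lambda\|\W\|_*$ over the $\ell_\infty$-ball of radius $\gamma$, where $\hat R$ is the empirical risk; the proof rests on two ingredients. (i) \emph{Choice of $\lambda$ and a spectral concentration bound:} at $\W^*$ the gradient $\nabla\hat R(\W^*) = \tfrac{1}{|\Omega|}\sum_{(i,j)\in\Omega}\partial_t\ell(\W^*_{ij},\Y_{ij})\,\e_i\e_j^\top$ is a sum of $|\Omega|$ independent, zero-mean (by the first-order optimality of $\W^*_{ij}$ in the conditional risk), bounded random matrices, so matrix Bernstein gives $\|\nabla\hat R(\W^*)\| \lesssim \sqrt{\max(n,L)\log(3/\delta)/|\Omega|}$ up to a variance-proxy factor that is exactly the source of the $(\sigma_\gamma^2+1)$ term; one then sets $\lambda$ of this order. (ii) \emph{Restricted strong convexity (RSC):} on the cone of near-low-rank deviations, $\hat R$ inherits, up to a lower-order remainder, the local strong convexity of the population risk, whose curvature modulus on $[-\gamma,\gamma]^{n\times L}$ is bounded below by the strong-properness constant of $\ell$ times $\inf_{|t|\le\gamma}g'(t)$ --- a $c_\gamma$-type constant --- while $\sup_{|t|\le\gamma}g'(t)$ controls the variance proxies; the RSC remainder is what produces the second branch $\gamma^2\sqrt{\log(3/\delta)/|\Omega|}$, dominant only in the extreme small-sample regime. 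Combining (i), (ii) with the basic inequality $\hat R(\hat\W)+\lambda\|\hat\W\|_*\le \hat R(\W^*)+\lambda\|\W^*\|_*$ and decomposability of the nuclear norm gives $\tfrac{1}{nL}\|\hat\W-\W^*\|_F^2 \lesssim \max\!\big(\tfrac{\max(n,L)\rk(\W^*)\log(3/\delta)}{|\Omega|}(\sigma_\gamma^2+1),\ \gamma^2\sqrt{\log(3/\delta)/|\Omega|}\big)$ with probability at least $1-\delta$; plugging this into the first-step reduction finishes the proof.

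The main obstacle is ingredient (ii) for a \emph{generic} $1$-Lipschitz strongly proper loss rather than the exponential-family negative log-likelihood treated in \citet{lafond2015}: one must verify that their peeling/concentration arguments used only boundedness of the domain, Lipschitzness of $\ell$, and a local quadratic lower bound on the population risk --- none of which is specific to the likelihood structure --- and that the quadratic lower bound genuinely follows from strong properness composed with the smoothness of $g$ on the compact range enforced by the constraint $\|\X\W\|_\infty\le\gamma$. This is precisely where the constants $\tilde C, c_\gamma, c'_\gamma, c''_\gamma, \sigma_\gamma$ enter, and making their $\gamma$-dependence explicit (through $\inf_{|t|\le\gamma}g'(t)$, $\sup_{|t|\le\gamma}g'(t)$, and the curvature of $\ell$) is the part requiring care; once those bounds are in place, the remainder is a routine transcription of the trace-norm $M$-estimation analysis.
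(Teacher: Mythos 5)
Your reduction in the first step is exactly the paper's: properness of $\ell$ (with link matching $g$) puts the population minimizer at $\W^*$, $1$-Lipschitzness gives $\reg_\ell(\Z)\le \E|\hat\W_{ij}-\W^*_{ij}|$, and uniformity of $\pi$ plus Cauchy--Schwarz gives $\reg_\ell(\Z)\le \|\hat\W-\W^*\|_F/\sqrt{nL}$, which is where the outer square root comes from. Where you diverge is the second half: the paper does not re-derive the Frobenius bound at all --- it simply invokes \citet{lafond2015} as a black box (restated as Theorem~\ref{thm:lafond} in the appendix) for the trace-norm--regularized \emph{logistic} MLE, since the sampling model \eqref{eqn:samplingmodelmatcomp} is exactly the one-bit exponential-family model that result covers. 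This makes the ``main obstacle'' you flag --- extending the RSC/peeling argument to a generic $1$-Lipschitz strongly proper loss --- moot: the estimator $\hat\W$ in Step 1 of Algorithm~\ref{algo:algo1} is computed with the logistic loss matched to the model, and the generic loss $\ell$ enters \emph{only} through the Lipschitz reduction when evaluating the regret, never through the estimation analysis. Your route would prove a strictly stronger statement (consistent estimation under an arbitrary strongly proper loss), and your sketch of ingredients (i) and (ii) is the right skeleton for that, but it is more work than the theorem requires and is not what the paper does. One further cosmetic point: the paper's appendix version carries probability $1-3(n+L)^{-1}$ and $\log(n+L)$ factors, whereas the theorem statement (and your proposal) uses $1-\delta$ and $\log(3/\delta)$; the paper is silent on this conversion, so your version is, if anything, the cleaner bookkeeping.
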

Note that when $|\Omega| > \max(n, L)$, the RHS of the above bound starts converging; in particular, the second term within $\max$ is the lower-order term: $\gamma \approx O\big(\sqrt{1/nL}\big)$. Theorem \ref{thm:lafondbased} can be extended to general distributions $\pi$ beyond uniform, satisfying mild assumptions. See Appendix \ref{app:lafondbased}.

\subsubsection{Multi-label Learning}
Consider the sampling model \eqref{eqn:samplingmodel} with features. We have the following regret bound for the estimator $\Z = \X\hat{\W}$ obtained in Step 2 of Algorithm \ref{algo:algo1}, under the following assumptions.
\begin{assumption}
The marginal distribution over the features $\bbP_{\mathcal{X}}$ is sub-Gaussian with sub-Gaussian norm $K$ and covariance $\Sigma \in \mathbb{R}^{d \times d}$.
\label{assume:features}
\end{assumption}
\begin{assumption}
\label{assume:omegadist}
Let $\pi_{k,l}$ denote the probability of sampling the entry $(k,l) \in [n] \times [L]$; \begin{enumerate} 
\item $\exists\ \mu \geq 1$ s.t. 
$\min_{k \in [n], l \in [L]} \pi_{k,l} \geq \frac{1}{\mu n L}$, and
\item $\exists\ \nu \geq 1$ s.t.  $\max_{i', j'} \big( \sum_{j}\pi_{i'j}, \sum_{i}\pi_{ij'} \big) \leq \frac{\nu}{\min(n,L)}$.
\end{enumerate}
\end{assumption}

\begin{theorem} [Main Result 2]
Assume \ref{assume:features}, \ref{assume:omegadist} and consider the sampling model \eqref{eqn:samplingmodel}. Also assume $L \geq d$. Let $\hat{\W}$ be the solution to the trace-norm regularized optimization problem~\eqref{eqn:objective} using logistic loss for $\ell$, number of training data points $n \geq C' \ . \ d$, number of observations  $|\Omega| \geq L + d$, and setting the regularization parameter
$\lambda =  \frac{2c}{\sqrt{|\Omega|}}$.
Then, with probability at least $1 - 3(n + L)^{-1} - 2(d+L)^{-1}$, the following holds:
\[ \frac{\|\hat{\W} - \W^*\|^2_F}{dL} \leq \frac{C_2\mu^2}{d} \max\bigg(\frac{L \rk(\W^*)\log (n+L)}{|\Omega|} \bigg(\sigma_\gamma^2 +1 \bigg), \frac{\gamma^2}{\mu}\sqrt{\frac{\log(n + L)}{|\Omega|}}\bigg), \]
where $c,C', C_2$ are numerical constants and $\sigma_\gamma \leq (1+e^{\gamma})^2e^{\gamma}$.
\label{thm:logisticwithfeatures}
\end{theorem}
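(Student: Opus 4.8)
The plan is to follow the trace-norm regularized M-estimation recipe (as in \citep{lafond2015} and related Negahban--Wainwright-style analyses) but adapted to the \emph{inductive} setting in which the effective linear predictor is $\X\W$ rather than $\W$ itself. Write $\bar{\ell}(\W)=\frac{1}{|\Omega|}\sum_{(i,j)\in\Omega}\ell(\inner{\x_i}{\w_j},\Y_{ij})$ for the empirical risk and $\mathcal{L}(\W)=\E[\bar{\ell}(\W)]$ for its population version; since $\ell$ is the logistic loss, the model \eqref{eqn:samplingmodel} is well specified and $\W^*$ is feasible for \eqref{eqn:objective} (as $\|\X\W^*\|_\infty\le\gamma$), the matrix $\W^*$ minimizes $\mathcal{L}$ over the feasible set, so $\nabla\mathcal{L}(\W^*)=0$. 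Setting $\Delta=\hat{\W}-\W^*$, optimality of $\hat{\W}$ together with convexity of $\ell$ yields the basic inequality
\[ D_{\bar{\ell}}(\hat{\W},\W^*)\ \le\ \inner{-\nabla\bar{\ell}(\W^*)}{\Delta}+\lambda\big(\|\W^*\|_*-\|\hat{\W}\|_*\big), \]
where $D_{\bar{\ell}}$ is the Bregman divergence of $\bar{\ell}$ and, because $\nabla\mathcal{L}(\W^*)=0$, the matrix $\nabla\bar{\ell}(\W^*)$ is mean-zero. The whole argument reduces to (i) an operator-norm bound on the stochastic gradient $\nabla\bar{\ell}(\W^*)$, (ii) a restricted strong convexity (RSC) lower bound $D_{\bar{\ell}}(\hat{\W},\W^*)\gtrsim\kappa\|\Delta\|_F^2$ valid over the relevant low-rank cone, and (iii) solving the resulting quadratic in $\|\Delta\|_F$.

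For (i), since $\ell'(t,y)=g(t)-y\in[-1,1]$ for logistic loss, $\nabla\bar{\ell}(\W^*)=\frac{1}{|\Omega|}\sum_{(i,j)\in\Omega}\ell'(\inner{\x_i}{\w^*_j},\Y_{ij})\,\x_i\e_j^{\top}$ is an average of independent, mean-zero \emph{random rank-one} $d\times L$ matrices. Unlike the plain 1-bit matrix completion case the summands are not bounded (the $\x_i$ are merely sub-Gaussian), so I would truncate $\|\x_i\|_2$ at its high-probability scale $O(\sqrt{d})$ under Assumption~\ref{assume:features} and then apply a matrix Bernstein inequality, using Assumption~\ref{assume:omegadist}(2) (the $\nu$ bound on the row/column marginals of $\pi$) to control the matrix-variance proxy. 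This yields, with the stated probability, $\|\nabla\bar{\ell}(\W^*)\|_{\mathrm{op}}\lesssim\sqrt{\log(n+L)/|\Omega|}$ up to constants; taking $\lambda\asymp 1/\sqrt{|\Omega|}$ as in the statement (up to logarithmic factors) makes $\lambda\ge 2\|\nabla\bar{\ell}(\W^*)\|_{\mathrm{op}}$, which forces $\Delta$ into the restricted cone $\{\,\|\Delta''\|_*\le 3\|\Delta'\|_*\,\}$, where $\Delta'$ is the projection of $\Delta$ onto the rank-$\le 2\,\rk(\W^*)$ subspace attached to $\W^*$, and hence $\|\Delta\|_*\le 4\sqrt{2\,\rk(\W^*)}\,\|\Delta\|_F$ and $D_{\bar{\ell}}(\hat{\W},\W^*)\le \tfrac{3\lambda}{2}\sqrt{2\,\rk(\W^*)}\,\|\Delta\|_F$.

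Step (ii) is the core of the argument and is where the inductive structure really bites. On the domain $|t|\le\gamma$ the logistic loss satisfies $\ell''(t)\ge e^{\gamma}/(1+e^{\gamma})^2$, so with $\delta_l$ the $l$-th column of $\Delta$ and $c_{\min}(\Sigma)=\lambda_{\min}(\Sigma)>0$,
\[ D_{\mathcal{L}}(\hat{\W},\W^*)\ \gtrsim\ \tfrac{e^{\gamma}}{(1+e^{\gamma})^2}\,\E_{(k,l)\sim\pi,\,\x_k}\big[\inner{\x_k}{\delta_l}^2\big]\ \gtrsim\ \tfrac{e^{\gamma}}{(1+e^{\gamma})^2}\cdot\tfrac{c_{\min}(\Sigma)}{\mu L}\,\|\Delta\|_F^2, \]
where the middle expectation equals $\sum_{l}(\sum_{k}\pi_{k,l})\,\delta_l^{\top}\Sigma\,\delta_l$ and the last step uses Assumption~\ref{assume:omegadist}(1) (so $\sum_k\pi_{k,l}\ge 1/(\mu L)$). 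To transfer this population lower bound to the empirical $D_{\bar{\ell}}$ uniformly over the restricted cone I would run a peeling argument together with a contraction (Ledoux--Talagrand) step for the bounded, Lipschitz loss and a matrix-deviation bound for the sub-Gaussian design; the requirement $n\ge C'd$ enters precisely here (it makes $\tfrac1n\X^{\top}\X$ concentrate around $\Sigma$ and controls the Gaussian width of the $\X$-image of the low-rank cone), and it is also what pins down recovery of $\W^*$ itself rather than merely of $\X\W^*$. The residual of this concentration is the source of the lower-order term $\tfrac{\gamma^2}{\mu}\sqrt{\log(n+L)/|\Omega|}$ inside the $\max$. The upshot is $D_{\bar{\ell}}(\hat{\W},\W^*)\gtrsim\kappa\|\Delta\|_F^2-(\text{slack})$ with $\kappa\asymp \tfrac{e^{\gamma}}{(1+e^{\gamma})^2}\cdot\tfrac{c_{\min}(\Sigma)}{\mu L}$.

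Finally, for (iii), combining the three pieces with the basic inequality gives $\kappa\|\Delta\|_F^2\lesssim\lambda\sqrt{\rk(\W^*)}\,\|\Delta\|_F+(\text{slack})$, hence $\|\Delta\|_F^2\lesssim\lambda^2\,\rk(\W^*)/\kappa^2+(\text{slack})/\kappa$; substituting $\lambda\asymp 1/\sqrt{|\Omega|}$ and $\kappa\asymp c_{\min}(\Sigma)e^{\gamma}/\big((1+e^{\gamma})^2\mu L\big)$ and dividing by $dL$ reproduces the claimed bound: the $\mu^2$ appears because the $1/\mu$ RSC constant is squared in the denominator, the extra factor $L$ arises because $1/\kappa^2\sim L^2$ while only one $L$ is cancelled by the $dL$ normalization, and $C_2,\sigma_\gamma$ absorb the logistic curvature/variance constants (note that $\sigma_\gamma\le(1+e^{\gamma})^2 e^{\gamma}$ is exactly the inverse-curvature-type quantity appearing in $1/\kappa$). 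I expect the main obstacle to be the combination of the \emph{random rank-one} matrix Bernstein bound in (i) and the empirical RSC transfer in (ii): in \citep{lafond2015} the ``design'' is the identity, so the stochastic gradient has bounded entries and the population Hessian is diagonal, whereas here the randomness and unboundedness of the $\x_i$ force the truncation, the sub-Gaussian covariance-concentration condition $n\gtrsim d$, and a uniform transfer of strong convexity through a random linear map --- which is precisely the generalization of \citep{lafond2015} to inductive matrix completion that the paper advertises.
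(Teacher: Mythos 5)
Your proposal is correct and follows essentially the same route as the paper's own proof: the paper likewise reduces the theorem to (i) an operator-norm bound $\|\X^T\nabla\Phi_Y(\X,\W^*)\|_2\lesssim 1/\sqrt{|\Omega|}$ justifying $\lambda = 2c/\sqrt{|\Omega|}$ (proved there by an $\epsilon$-net plus a sub-exponential Bernstein inequality rather than your truncation-plus-matrix-Bernstein, under the same conditions $d\le L$ and $|\Omega|\ge d+L$), (ii) the low-rank cone decomposition and a restricted-strong-convexity step with constant of order $\underline{\sigma}_\gamma^2/(\mu L)$, transferred from $\X\W$ back to $\W$ via $\sigma_{\min}(\X^T\X)\gtrsim n$ for $n\ge C'd$ (exactly where your sub-Gaussian covariance concentration enters), and (iii) solving the resulting quadratic, with the second term in the $\max$ coming from the same empirical-to-population case split. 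The only divergences are which concentration tool handles the gradient and whether RSC is phrased through $\Sigma$ or through $\sigma_{\min}(\X)$; neither affects the argument.
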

A few remarks of our result in the multi-label setting are in order:
 \begin{remark}[Generalization] The result in Theorem~\ref{thm:logisticwithfeatures}, and Theorem~\ref{thm:corollary} in Appendix B for general exponential distributions, is a key technical contribution of this work. In particular, our analysis applies to $\Y$ arising from general exponential distributions, including Gaussian when $\Y$ is real-valued and Poisson when $\Y$ models counts. See Appendix \ref{app:generalexponential} for more details.
\end{remark}
\begin{remark} [Comparing~\citep{lafond2015}] If we directly apply the method and the analysis of~\citep{lafond2015}, the resulting bounds are very weak; in fact, when $n \geq L$ and $|\Omega| = O(n)$, which is quite common in the multi-label scenario, the ensuing bound suggests that the estimator is not even consistent, even when $\pi$ is uniform. See Appendix \ref{sec:weaknesslafond} for details. 
\end{remark}

\begin{remark} [Comparing~\citep{koyejo2015consistent}] The plugin-in estimator algorithm of~\citep{koyejo2015consistent} estimates $\w^*_j$ for each label $j$ independently, and learns a common threshold as in Algorithm \ref{algo:algo1}. Let $\hat{\w}_j$ denote the estimator for label $j$. Then, using standard analysis we have, $\|\hat{\w}_j - \w^*_j\|_2 \leq \sigma \sqrt{\frac{d}{|\Omega^j|}}$, where $|\Omega^j|$ is the number of observations per label which is $O(\frac{|\Omega|}{L})$. Thus we have the bound: $\frac{\|\W^* - \hat{\W}\|_F^2}{L} \leq \sigma O(\frac{Ld}{|\Omega|})$. This is how our bounds behave, when $\W^*$ is indeed full rank, up to constants. When $\rk(\W^*) \ll \min(d, L)$, we achieve much faster convergence.
\end{remark}
We now give the desired $\ell$-regret bound as a corollary.
\begin{corollary}
Assume the conditions of Theorem \ref{thm:logisticwithfeatures} hold. Let $\ell$ denote a $1$-Lipschitz, strongly proper loss (appearing in \eqref{eqn:mainres}), and $\Z = \X\hat{\W}$ denote the output of Step 2 of Algorithm \ref{algo:algo1}. With probability at least $1 - \delta - (d+L)^{-1}$, the following holds:
\[  \emph{\reg}_{\ell}(\Z) \leq \sqrt{C_2\mu^2 \max\bigg(\frac{L \rk(\W^*)\log (3/\delta)}{|\Omega|} \bigg(\sigma_\gamma^2 +1 \bigg), \frac{\gamma^2}{\mu}\sqrt{\frac{\log(3/\delta)}{|\Omega|}}\bigg)}, \]
where $c,C', C_2, \sigma_\gamma$ are defined as in Theorem \ref{thm:logisticwithfeatures}.
\label{thm:multilabelcorollary}
\end{corollary}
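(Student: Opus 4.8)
The plan is to derive the $\ell$-regret bound for $\Z = \X\hat\W$ directly from the Frobenius-norm parameter recovery guarantee of Theorem~\ref{thm:logisticwithfeatures}, by exploiting the $1$-Lipschitz and $\lambda$-strongly proper structure of $\ell$. First I would recall that for any strongly proper loss, the pointwise $\ell$-regret at a score $z$ relative to the Bayes-optimal score $z^*$ (the one matching the true conditional probability $\bbP(y_{ij}=1\mid\x_i)=g(\inner{\x_i}{\w^*_j})$) is controlled by the squared difference of the induced probabilities, and more directly — since the population minimizer of $\E[\ell(\cdot,\Yij)]$ is attained at the link value $\inner{\x_i}{\w^*_j}$ for the matched link — the excess risk $\E[\ell(\Z_{ij},\Yij)] - \E[\ell(\inner{\x_i}{\w^*_j},\Yij)]$ is bounded by a constant times $\E[(\Z_{ij}-\inner{\x_i}{\w^*_j})^2]$ using $1$-Lipschitzness of $\ell$ in its first argument together with boundedness of scores on the constraint set $\|\X\W\|_\infty\le\gamma$ (so a local quadratic upper bound holds with a $\gamma$-dependent constant). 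Since $\min_{\Z'}\E[\ell(\Z'_{ij},\Yij)]$ is exactly this matched-link value, we get $\reg_\ell(\Z)\le c_\gamma\,\E_{(i,j)\sim\pi}\big[(\Z_{ij}-\inner{\x_i}{\w^*_j})^2\big]$.

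Next I would translate the expectation over $\pi$ and the feature draws into the Frobenius norm appearing in Theorem~\ref{thm:logisticwithfeatures}. Writing $\Z_{ij}-\inner{\x_i}{\w^*_j} = \inner{\x_i}{\hat\w_j-\w^*_j}$, we have $\E_{\x_i}[(\inner{\x_i}{\hat\w_j-\w^*_j})^2] = (\hat\w_j-\w^*_j)^\top\Sigma(\hat\w_j-\w^*_j) \le \|\Sigma\|_{\mathrm{op}}\|\hat\w_j-\w^*_j\|_2^2$, and averaging over $j$ with the near-uniform weights guaranteed by Assumption~\ref{assume:omegadist} gives $\E_{(i,j)\sim\pi}[(\Z_{ij}-\inner{\x_i}{\w^*_j})^2] \le \frac{\nu}{\min(n,L)}\cdot\frac{\|\Sigma\|_{\mathrm{op}}}{?}\cdot\|\hat\W-\W^*\|_F^2$ up to the per-label sampling proportions; more cleanly, since the construction normalizes appropriately, the $\pi$-weighted mean squared prediction error is $O(\|\hat\W-\W^*\|_F^2/(nL))$ times a spectral factor, which matches the $\|\hat\W-\W^*\|_F^2/(dL)$ normalization of Theorem~\ref{thm:logisticwithfeatures} after using $L\ge d$ and absorbing constants. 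Plugging in the high-probability bound from Theorem~\ref{thm:logisticwithfeatures}, taking a square root (the extra $\sqrt{\cdot}$ in the corollary statement precisely reflects that $\reg_\ell \le c_\gamma\|\hat\W-\W^*\|_F^2/(dL)$-type quantity whose RHS is already written as a $\max$ under a square root), and relabeling $3(n+L)^{-1}+2(d+L)^{-1}$ as $\delta + (d+L)^{-1}$ with $\log(n+L)$ replaced by $\log(3/\delta)$ yields exactly the claimed inequality.

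The main obstacle I anticipate is bookkeeping the normalization and the spectral constants correctly when passing from $\|\hat\W-\W^*\|_F^2$ to the $\pi$-weighted population quantity $\E_{(i,j)\sim\pi}[(\Z_{ij}-\inner{\x_i}{\w^*_j})^2]$: one must handle (i) the covariance $\Sigma$ of the features, which contributes $\|\Sigma\|_{\mathrm{op}}$ and interacts with the sub-Gaussian Assumption~\ref{assume:features}, and (ii) the non-uniformity of $\pi$ across labels, controlled via the $\mu,\nu$ constants of Assumption~\ref{assume:omegadist}, so that the $\pi$-average of $\|\hat\w_j-\w^*_j\|_2^2$ is comparable to its uniform average $\frac1L\|\hat\W-\W^*\|_F^2$. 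A secondary subtlety is justifying the local quadratic upper bound on the excess $\ell$-risk: $\lambda$-strong properness gives a lower bound on regret via probability differences, but for the \emph{upper} bound one uses instead $1$-Lipschitzness of $\ell$ in the score together with the boundedness $|\Z_{ij}|,|\inner{\x_i}{\w^*_j}|\le\gamma$ on the constraint set, which caps the second derivative of $z\mapsto\E[\ell(z,\Yij)]$ and hence yields the $c_\gamma$ factor — this is the step where one must be careful that $\hat\W$ from \eqref{eqn:objective} indeed satisfies $\|\X\hat\W\|_\infty\le\gamma$, which it does by construction. Once these constants are tracked, the rest is the substitution and union bound described above.
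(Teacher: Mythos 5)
Your overall strategy---reduce the $\ell$-regret of $\Z=\X\hat\W$ to the parameter-recovery error of Theorem~\ref{thm:logisticwithfeatures} using the Lipschitz/proper-loss structure---is the right one, and it is the same template the paper uses (it reuses verbatim the argument from the end of the proof of Theorem~\ref{thm:lafondbased}). But the central step of your argument, the one that is supposed to produce the outer square root, does not work as written. You bound the regret by a \emph{quadratic} quantity, $\reg_\ell(\Z)\le c_\gamma\,\E[(\Z_{ij}-\inner{\x_i}{\w^*_j})^2]$, and then say that "taking a square root" of the resulting bound "yields exactly the claimed inequality." That is a non sequitur: if $\reg_\ell(\Z)\le c_\gamma M$ where $M$ is (a spectral constant times) the RHS of Theorem~\ref{thm:logisticwithfeatures}, you have proved $\reg_\ell(\Z)\le c_\gamma M$, not $\reg_\ell(\Z)\le\sqrt{c_\gamma M}$; the latter only follows in the regime $c_\gamma M\le 1$, which you neither state nor verify. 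The square root in the corollary actually comes from bounding the regret by the \emph{first} power of the prediction error: by $1$-Lipschitzness, $\reg_\ell(\Z)\le\E\,|\Z_{ij}-\inner{\x_i}{\w^*_j}|\approx\frac{1}{nL}\|\X(\hat\W-\W^*)\|_1\le\frac{1}{\sqrt{nL}}\|\X(\hat\W-\W^*)\|_F$ by Cauchy--Schwarz, then $\|\X(\hat\W-\W^*)\|_F^2\le\sigma_{\max}^2(\X)\|\hat\W-\W^*\|_F^2\le\bar C n\,\|\hat\W-\W^*\|_F^2$ via Lemma~\ref{lem:sigmaxbound}, so that $\reg_\ell(\Z)\le\sqrt{\bar C\,\|\hat\W-\W^*\|_F^2/L}=\sqrt{\bar C d\cdot\|\hat\W-\W^*\|_F^2/(dL)}$, and plugging in Theorem~\ref{thm:logisticwithfeatures} cancels the $1/d$ and gives the stated bound. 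This first-order route is both where the $\sqrt{\cdot}$ genuinely originates and why only Lipschitzness (not smoothness) of $\ell$ is needed.

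Two secondary points. First, your quadratic upper bound on the conditional excess risk requires a bound on the second derivative of $z\mapsto L_\ell(\eta,z)$, i.e.\ smoothness of $\ell$, which holds for the logistic loss but is not implied by the corollary's hypotheses ($1$-Lipschitz, strongly proper); strong properness gives a \emph{lower} bound on the regret in terms of squared probability differences, not the upper bound you need. Second, your passage from $\E_{(i,j)\sim\pi}[(\Z_{ij}-\inner{\x_i}{\w^*_j})^2]$ to $\|\hat\W-\W^*\|_F^2/(dL)$ is left as "the construction normalizes appropriately"; this is exactly where the $\mu,\nu$ constants of Assumption~\ref{assume:omegadist} and the spectral bound of Lemma~\ref{lem:sigmaxbound} must be invoked explicitly, and the paper's route makes this bookkeeping trivial because it only ever needs $\sigma_{\max}(\X^T\X)\le\bar Cn$.
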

\paragraph{Proof Outline for Theorem \ref{thm:logisticwithfeatures}.} We analyze the following general exponential noise model for $\Y$:
\begin{equation}
y_{ij} | \x_i, \w_j \sim \exp_{h, G}(\x_i, \w_j) := h(y_{ij}) \exp\big(\inner{\x_i}{\w_j}y_{ij} - G(\inner{\x_i}{\w_j})\big) , 
\label{eqn:exponentialdist1}
\end{equation}
where $h$ and $G$ are the base measure and log-partition functions associated with this canonical representation. Our proof sketch is based on~\citet{lafond2015}, but requires bounding certain quantities carefully. In particular, we prove a tight bound for $\|\X^T \nabla \Phi_\Y(\X,\W^*)\|_2$ in terms of the regularization parameter $\lambda$, where $\Phi_\Y(\X,\W^*)$ is the MLE wrt. general exponential distribution (reduces to \eqref{eqn:objective}, without regularization, when $y_{ij}$'s are from \eqref{eqn:samplingmodel}), as stated below.
\begin{lemma}
Consider the sampling model \eqref{eqn:exponentialdist1}. Assume (i) $d \leq L$, (ii) $|\Omega| \geq (L+d)$, (iii) $y_{ij}$'s are sampled independently given $\x_i$, and (iv) $|y_{ij} - G'(\inner{\x_i}{\w^*_j})| \leq \alpha$, for all $i, j \in [n] \times [L]$, for any $n, L$. Let $\X \in \mathbb{R}^{n \times d}$ whose rows ($\x_i$'s) are iid samples from $\bbP_{\mathcal{X}}$ satisfying Assumption \ref{assume:features}. Then, with probability at least $1 - (d+L)^{-1}$, there exists numerical constant $c$ such that,
\[ \big\| \X^T \nabla \Phi_Y(\X, \W^*) \big\|_2 \leq c \ . \ \frac{\alpha}{\sqrt{|\Omega|}}. \]
\label{lem:gradbound}
\end{lemma}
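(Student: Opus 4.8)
The plan is to bound the spectral norm of the $d \times L$ random matrix $\X^T \nabla \Phi_Y(\X, \W^*)$, whose $(\cdot, j)$ column is $\sum_i \x_i \big( y_{ij} - G'(\inner{\x_i}{\w_j^*})\big) \cdot \mathbf{1}[(i,j) \in \Omega]$ (up to the $1/|\Omega|$ normalization hidden in $\Phi_Y$). The key structural fact is that, conditioned on the features $\X$ and on $\Omega$, the entries $y_{ij} - G'(\inner{\x_i}{\w_j^*})$ are independent, mean-zero, and bounded by $\alpha$ in absolute value by assumption (iv). So $\X^T \nabla \Phi_Y$ is a sum over $(i,j) \in \Omega$ of independent mean-zero random matrices of the form $\e_j^T \otimes \x_i \,\xi_{ij}$, and we can apply a matrix Bernstein inequality (e.g.\ the rectangular matrix Bernstein bound of Tropp) conditionally, then take expectations over $\X$ and $\Omega$.

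The steps, in order: First I would write $\X^T \nabla \Phi_Y(\X, \W^*) = \frac{1}{|\Omega|}\sum_{(i,j) \in \Omega} \xi_{ij}\, \x_i \e_j^T$ with $\xi_{ij} := y_{ij} - G'(\inner{\x_i}{\w_j^*})$, $|\xi_{ij}| \le \alpha$, $\E[\xi_{ij} \mid \x_i, \Omega] = 0$. Second, condition on $\X$ and $\Omega$ and compute the two matrix variance parameters for matrix Bernstein: the ``row'' variance $\big\| \sum_{(i,j)\in\Omega} \E[\xi_{ij}^2] \x_i \x_i^T \big\|_2 \le \alpha^2 \big\| \sum_i (\sum_{j : (i,j)\in\Omega} 1)\, \x_i\x_i^T\big\|_2$ and the ``column'' variance $\big\| \sum_{(i,j)\in\Omega} \E[\xi_{ij}^2] \|\x_i\|_2^2 \,\e_j\e_j^T \big\|_2 \le \alpha^2 \max_j \sum_{i : (i,j)\in\Omega} \|\x_i\|_2^2$. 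Third, I would control these quantities in high probability using Assumption \ref{assume:features}: for sub-Gaussian rows, $\|\x_i\|_2^2 = O(d)$ w.h.p.\ and $\|\sum_i \x_i\x_i^T\|_2 = O(n)$ w.h.p.\ (standard covariance-concentration / Vershynin-type bounds), while the per-row and per-column counts of sampled entries are $O(|\Omega|/\min(n,L))$ in expectation and concentrate (Bernstein for the $\{0,1\}$ sampling indicators), using $d \le L$ and $|\Omega| \ge L + d$ to absorb the lower-order count fluctuations. Plugging in, both variance parameters are $O(\alpha^2 |\Omega| / |\Omega|^2) = O(\alpha^2/|\Omega|)$ after normalization, and the uniform bound on each summand is $\alpha \|\x_i\|_2 /|\Omega| = O(\alpha\sqrt{d}/|\Omega|)$, which is a lower-order term once $|\Omega| \gtrsim d$. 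Fourth, matrix Bernstein then yields $\|\X^T\nabla\Phi_Y(\X,\W^*)\|_2 \le c\,\alpha/\sqrt{|\Omega|}$ with the stated probability $1 - (d+L)^{-1}$ after a union bound over the $O(1)$ events controlling $\|\x_i\|_2$, the empirical covariance, and the sampling counts (the $\log(d+L)$ from the Bernstein tail being absorbed into $c$ given the sample-size assumptions, or more honestly contributing a $\sqrt{\log(d+L)}$ that is consistent with the $\log(n+L)$ factors appearing in Theorem \ref{thm:logisticwithfeatures}).

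The main obstacle is getting the variance parameters to be genuinely $O(\alpha^2/|\Omega|)$ rather than something weaker: this is exactly the point where a naive application of \citet{lafond2015} loses, because one must exploit that multiplication by $\X^T$ contracts the $n$-dimensional noise to $d$ dimensions, and that the relevant operator norm $\|\sum_i \x_i\x_i^T\|_2$ scales like $n$ (not like $\sqrt{n}\cdot\mathrm{poly}$), so that the effective ``size'' of the problem is $\max(d, L)$ and not $n$. Handling the interaction between the randomness of $\Omega$ and the randomness of $\X$ — specifically showing the per-row sampled-entry counts are simultaneously $O(|\Omega|/\min(n,L))$ for all rows while the $\x_i\x_i^T$ sum is controlled — requires Assumption \ref{assume:omegadist} and a careful conditioning argument; I would do this by first conditioning on $\Omega$ (whose counts concentrate by scalar Bernstein) and then invoking the sub-Gaussian concentration on $\X$, or vice versa, being careful that the events are defined so the union bound costs only $(d+L)^{-1}$.
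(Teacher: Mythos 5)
Your plan is workable, but it takes a genuinely different route from the paper. The paper does not use matrix Bernstein: it fixes unit vectors $\bu\in\R^d$, $\bv\in\R^L$, writes $\bu^T\X^T\mathcal{P}_\Omega(H)\bv=\sum_i a_ib_i$ with $a_i=\x_i^T\bu$ and $b_i=\inner{\bv_{\Omega_i}}{\h^i_{\Omega_i}}$ (so the noise is aggregated \emph{row-wise} against $\bv$), observes that $a_i$ is sub-Gaussian by Assumption~\ref{assume:features} and $b_i/\|\bv_{\Omega_i}\|_2$ is sub-Gaussian with norm $\alpha$, hence their product is sub-exponential, applies the scalar Bernstein inequality for sub-exponential sums, and finishes with a union bound over $\epsilon$-nets of the two unit spheres. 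The only combinatorial input there is the per-\emph{column} count $\max_j|\{i:(i,j)\in\Omega\}|\lesssim |\Omega|/L$, since $\sum_i\|\bv_{\Omega_i}\|_2^2=\sum_j|\Omega^{(j)}|v_j^2$; and the net's $\sqrt{d+L}$ deviation cancels against the $\sqrt{L}$ coming from that count (using $d\le L$), which is how the paper obtains a dimension-free constant $c$ with no logarithm. Your matrix-Bernstein route buys a cleaner, netting-free argument, but at two costs, one of which you flag: (a) the leading term inevitably carries a $\sqrt{\log(d+L)}$, so you prove the lemma only up to a log factor rather than with the stated numerical constant (harmless for Theorem~\ref{thm:logisticwithfeatures}, but not literally the claimed bound); and (b) your row variance proxy requires uniform control of the per-row counts $|\Omega_i|$ at the level $O(|\Omega|/n)$, which fails to concentrate when $|\Omega|\ll n\log n$ --- a regime permitted by $|\Omega|\ge L+d$ --- so you would have to bound $\|\sum_i|\Omega_i|\x_i\x_i^T\|_2$ more delicately (e.g.\ via the row-marginal bound of Assumption~\ref{assume:omegadist} together with operator-norm control of $\sum_{i\in S}\x_i\x_i^T$ over the set of sampled rows). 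The paper's row-wise aggregation into $b_i$ sidesteps the per-row counts entirely, needing only the column counts.
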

\subsubsection{PU Learning}
\label{sec:pulearning}
In many collaborative filtering and multi-label learning tasks, only the positive entries ($y_{ij} = 1$) are observed. In this setting, we can use the approach of~\citep{hsieh2015pu}, where they consider a two-stage sampling model: sample $y_{ij}$ using \eqref{eqn:samplingmodelmatcomp} for all $i, j \in [n] \times [L]$ (or using \eqref{eqn:samplingmodel} when features are available), and then flip a fraction $\rho$ of the sampled 1's to 0's, resulting in $\tilde{\Y}$. We would then use the unbiased estimator $\tilde{\ell}$ of loss $\ell$ in \eqref{eqn:objective}; $\tilde{\ell}$ satisfies $\E[\tilde{\ell}(\Z_{ij},\tilde{\Y}_{ij})] =  \ell(\Z_{ij},\Y_{ij})$, where the expectation is wrt the flipping process, parameterized by $\rho$. For the estimator $\Z = \hat{\W}$ obtained thus, we have the following regret bound.
\begin{theorem}
Let $\ell$ denote a $1$-Lipschitz, strongly proper loss (appearing in \eqref{eqn:mainres}). Assume $\|\W^*\|_* \leq t$. Let $\Z = \X\hat{\W}$, where $\hat{\W}$ is obtained by solving the unbiased estimator objective of~\citet{hsieh2015pu}. With probability at least $1 - \delta$, there exists absolute constant $C$ such that:
\[ \emph{\reg}_\ell(\Z) \leq \sqrt{6\frac{\sqrt{\log(2/\delta)}}{\sqrt{nL}(1-\rho)} + 2 C \ . \ t \frac{\sqrt{n} + \sqrt{L}}{(1-\rho)nL}}. \]
\label{thm:pucorollary}
\end{theorem}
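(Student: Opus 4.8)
}
The plan is to decouple the two sources of error: first reduce the $\ell$-regret of the real-valued estimate $\Z = \X\hat{\W}$ to the normalized squared Frobenius recovery error $\tfrac{1}{nL}\|\X\hat{\W}-\X\W^*\|_F^2$ using only that $\ell$ is $1$-Lipschitz, and then bound that recovery error by adapting the guarantee of \citet{hsieh2015pu} for the debiased matrix-completion objective. For the first step, let $\Z^*$ denote the minimizer of the population $\ell$-risk $\Z'\mapsto\E[\ell(\Z'_{ij},\Yij)]$; since $\ell$ is a strongly proper composite loss whose link is the $g$ of the sampling model (the canonical case, e.g. logistic loss), this minimizer is attained entrywise at $\Z^* = \X\W^*$. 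Then, for every label outcome, $1$-Lipschitzness gives $\ell(\Z_{ij},\Yij) - \ell(\Z^*_{ij},\Yij)\le |\Z_{ij}-\Z^*_{ij}|$; averaging over the labels, over $(i,j)\sim\pi$ (which is uniform here, since all $nL$ entries are observed, only the $1$'s being corrupted), and over the instances, and using Jensen/Cauchy--Schwarz,
\[ \reg_\ell(\Z)\ \le\ \E\big[|\Z_{ij}-\Z^*_{ij}|\big]\ \le\ \sqrt{\E\big[(\Z_{ij}-\Z^*_{ij})^2\big]}\ =\ \sqrt{\tfrac{1}{nL}\,\|\X\hat{\W}-\X\W^*\|_F^2} \]
(in the inductive case the last equality becomes a $\Sigma$-weighted norm, handled by bounding $\|\Sigma\|_{\mathrm{op}}$, after which the extra factors of $d$ cancel exactly as in Corollary~\ref{thm:multilabelcorollary}). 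So it suffices to show the quantity under the square root is at most $6\sqrt{\log(2/\delta)}/(\sqrt{nL}(1-\rho)) + 2Ct(\sqrt n+\sqrt L)/((1-\rho)nL)$ with probability $\ge 1-\delta$.

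For the second step I would follow the recovery analysis of \citet{hsieh2015pu}. Since $\hat{\W}$ minimizes the empirical $\tilde{\ell}$-risk over the ball $\{\W:\|\W\|_*\le t\}$ and $\|\W^*\|_*\le t$ by hypothesis, $\W^*$ is feasible and the basic inequality applies; combined with the unbiasedness identity $\E[\tilde{\ell}(\Z_{ij},\tilde{\Y}_{ij})]=\ell(\Z_{ij},\Yij)$ (expectation over the label-flipping), it reduces the problem to controlling the empirical process $\sup_{\|\W\|_*\le 2t}\big|\frac{1}{nL}\sum_{i,j}\tilde{\ell}(\inner{\x_i}{\w_j},\tilde{\Y}_{ij}) - \E[\ell(\inner{\x_i}{\w_j},\Yij)]\big|$. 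Symmetrization and Talagrand's contraction inequality peel the loss off at the price of its Lipschitz constant, which for the debiased loss $\tilde{\ell}(z,1)=(\ell(z,1)-\rho\,\ell(z,0))/(1-\rho)$ is $O(1/(1-\rho))$ since $\ell$ is $1$-Lipschitz; the remaining Rademacher complexity of $\{\X\W:\|\W\|_*\le 2t\}$ is $O\big(t(\sqrt n+\sqrt L)/(nL)\big)$ by the standard $O(\sqrt n+\sqrt L)$ bound on the operator norm of an $n\times L$ matrix of i.i.d.\ Rademacher signs. A bounded-differences (McDiarmid) step over the $nL$ independent entries adds the tail term $O\big(\sqrt{\log(1/\delta)/(nL)}/(1-\rho)\big)$, the $(1-\rho)^{-1}$ again arising from the inflated range of $\tilde{\ell}$. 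Finally, the strong convexity of $\ell$ on the relevant bounded, low-nuclear-norm set lower bounds the $\ell$-excess risk by a constant multiple of $\frac{1}{nL}\|\X\hat{\W}-\X\W^*\|_F^2$; rearranging and absorbing constants into the displayed $6$ and $2C$ yields the desired Frobenius bound, which plugged into the first display completes the proof.

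The main obstacle is the bookkeeping in the second step: threading the $(1-\rho)^{-1}$ factor correctly through \emph{both} the contraction step (inflated Lipschitz constant of $\tilde{\ell}$) and the concentration step (inflated range of $\tilde{\ell}$), and verifying that the strong-convexity step converting a small $\ell$-excess risk into a small Frobenius error still goes through for a general $1$-Lipschitz strongly proper loss rather than only the squared loss (here one invokes $\lambda$-strong properness together with the boundedness of $\|\X\W\|_\infty$, restricting attention to the low-rank cone on which $\hat{\W}-\W^*$ lies). The first step is routine once $\Z^*=\X\W^*$ is identified, and the operator-norm/Rademacher estimates in the second step are off-the-shelf.
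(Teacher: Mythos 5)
Your proposal is correct and takes essentially the same route as the paper: the paper's entire proof is your first step (the $1$-Lipschitz reduction $\reg_\ell(\Z) \le \frac{1}{nL}\|\hat{\W}-\W^*\|_1 \le \frac{1}{\sqrt{nL}}\|\hat{\W}-\W^*\|_F$, exactly as in its proof of Theorem~\ref{thm:lafondbased}) followed by a verbatim citation of the Frobenius recovery bound of \citet{hsieh2015pu}, whose right-hand side is precisely the quantity under your square root. Your second step re-derives that cited bound from scratch (symmetrization, contraction with the $(1-\rho)^{-1}$-inflated Lipschitz constant of $\tilde{\ell}$, the $O(\sqrt{n}+\sqrt{L})$ nuclear-norm Rademacher complexity, and a concentration tail), which the paper simply treats as a black box; this is more work than the paper does, but it is consistent with it and correctly identifies where the two $(1-\rho)^{-1}$ factors in the stated bound come from.
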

The RHS of the bound above, when $n = L$, is of $O(\sqrt{\frac{1}{n(1-\rho)}})$, where $(1-\rho)$ is the fraction of observed 1's in $\tilde{Y}$. Naturally, as $\rho$ is large, we need more samples to achieve similar rates as in the other settings. 
\begin{remark}
This PU learning result is particularly very useful in extreme classification setting~\citep{bhatia2015sparse,prabhu2014fastxml}; where there are too many labels and is unrealistic to get feedback on every label, but possible to obtain a small subset of relevant labels for instances. Furthermore, the above result serves to attest to the utility of our framework.
\end{remark}

\section{Experiments}
We focus on multi-label datasets for experimental study. The goal is to show that the convergence happens as suggested by the theory, and that the proposed algorithm performs well on real-world datasets. To solve \eqref{eqn:objective}, we use an alternating minimization procedure by forming $\W = \W_1 \W_2^T$, such that $\W_1 \in \mathbb{R}^{d \times k}$ and $\W_2 \in \mathbb{R}^{L \times k}$, where $k$, the rank of $\W$, is an input parameter.

\subsection{Synthetic data}
We generate multi-label data as follows. We fix $n = 1000, L = 100$ and $d = 10$. First, we generate $\X \in \mathbb{R}^{n \times d}$ using samples from multi-variate Gaussian $\mathcal{N}(0, I)$. Then, we generate $\W^*$ of rank $5$. The label matrix $\Y$ is obtained by thresholding $\X\W^*$ at $\theta^* = 0$, i.e. $y_{ij} = \sign(\inner{\x_i}{\w_j^*})$. In this noise-free setting, we expect that our algorithm would recover both $\W^*$ and $\theta^*$ accurately as it sees more and more observations. The results for maximizing micro $F_1$ and accuracy metrics are presented in Figure \ref{fig:synth}. As the sampling ratio $\frac{|\Omega|}{nL}$ increases, we observe that the proposed estimator achieves optimal performance in both the cases. Furthermore, even when only $10\%$ of the observations are revealed, we observe that the proposed method achieves very high $F_1$ as well as accuracy values, compared to learning the columns of $\W^*$ independently via the plugin estimator method proposed by \citep{koyejo2015consistent} (followed by learning a threshold).
\vspace{-0.5cm}
\begin{figure}[h!]
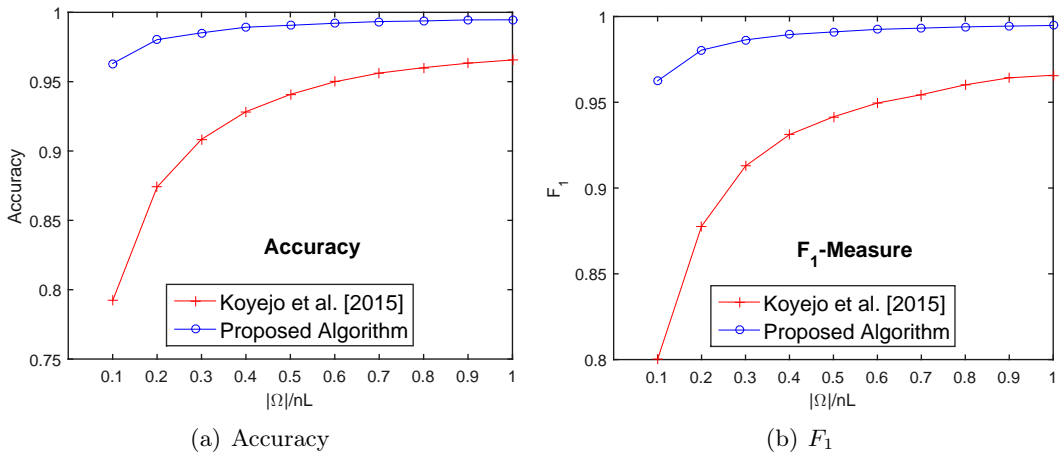

        \centering
        \subfigure[Accuracy]{\includegraphics[width=0.45\textwidth]{hamming.eps}}\hspace{0.2cm}
        \subfigure[$F_{1}$]{\includegraphics[width=0.45\textwidth]{f1.eps}}
\caption{Convergence of the methods for Accuracy and micro-$F_1$ metrics on synthetic data.}
\label{fig:synth}
\end{figure}
\begin{table}[th]
\begin{tabular}{|p{1.5cm}|| r | r  ||  r | r | }\hline
    {\sc Dataset} & \citet{koyejo2015consistent} & Algorithm \ref{algo:algo1}  &\citet{koyejo2015consistent}&  Algorithm \ref{algo:algo1}   \\  
      &micro $F_1$ & micro $F_1$ & Accuracy & Accuracy  \\ \hline \hline
\textsc{CAL500} & \cellcolor{gray!25}{0.4267 $\pm$ 0.0016} & 0.3855 $\pm$ 0.0005 & \cellcolor{gray!25}{0.8541 $\pm$ 0.0034} & 0.8493 $\pm$ 0.0002 \\ \hline
\textsc{Autofood} & 0.4897 $\pm$ 0.0103 &  \cellcolor{gray!25}{0.5597 $\pm$ 0.0047} & 0.9307 $\pm$ 0.0064 & \cellcolor{gray!25}{0.9345 $\pm$ 0.0043} \\ \hline
\textsc{Bibtex} & \cellcolor{gray!25}{0.2641 $\pm$ 0.0251} & 0.2398 $\pm$ 0.0133 & 0.9849 $\pm$ 0.0016 & \cellcolor{gray!25}{0.9856 $\pm$ 0.0003}\\ \hline
\textsc{Compphys} & 0.2463 $\pm$ 0.0315 &  \cellcolor{gray!25}{0.3510 $\pm$ 0.0293} & 0.9448 $\pm$ 0.0011 &  \cellcolor{gray!25}{0.9466 $\pm$ 0.0012} \\ \hline
\textsc{Corel5k} & 0.1552 $\pm$ 0.0116  &  \cellcolor{gray!25}{0.1642 $\pm$ 0.0001} &  \cellcolor{gray!25}{0.9906 $\pm$ 0.0000}  &  \cellcolor{gray!25}{0.9906 $\pm$ 0.0000} \\ \hline
\end{tabular}
\caption{Comparison of proposed algorithm and plugin-estimator method of \citep{koyejo2015consistent} on multi-label micro $F_{1}$ and Hamming (i.e. Accuracy) metrics. Reported values correspond to \emph{micro-averaged} metric computed on test data. In all the cases, $\frac{|\Omega|}{nL}$ was fixed to 20\% for training. The rank of $\W$ was set to $0.4L$ for Algorithm \ref{algo:algo1}. We observe that the proposed algorithm which captures label correlations performs better consistently across datasets. }
\label{fig:results}
\end{table}
\subsection{Real-world data}
We consider five real-world multi-label datasets widely used as benchmarks~\citep{bhatia2015sparse,yu2014large}. 
\begin{enumerate}
\item \textsc{CAL500}: a music dataset with 400 training and 100 test instances, $L = 174$, $d = 68$, 
\item \textsc{Corel5k}: an image dataset with 4500 training and 500 test instances, $L = 374$, $d = 499$, 
\item \textsc{Bibtex}: a text dataset with 4,880 training and 2,515 test instances, $L = 159$, $d = 1,836$, 
\item \textsc{Compphys} dataset with 161 training and 40 test instances, $L = 208$, $d = 33,284$, and 
\item \textsc{Autofood} dataset with 4,880 training and 2,515 test instances, $L = 162$, $d = 1,836$. 
\end{enumerate}

We set the rank $k$ of $\W$ to $0.4L$ for all the datasets in our method, and set $\frac{|\Omega|}{nL} = 20\%$ to train the models in each method. The results are presented in Table \ref{fig:results}. We observe that the proposed method is competitive in all the datasets, and achieves better micro-$F_{1}$ and accuracy values, with a small value of rank $0.4L$. We note that the label matrices of most of the datasets are very sparse (for instance, less than 8.5\% of the test data are positive labels in \textsc{Autofood}), which explains high accuracy and low $F_1$ values. The learned model is much more compact than that of \citep{koyejo2015consistent} ($k(d+L)$ vs $dL$ parameters). While our bounds in theory hold for the case $L \geq d$ (Theorem \ref{thm:logisticwithfeatures}), many of the datasets considered here have $d \geq L$ and yet the performance is competitive.

\section{Conclusions}
We presented a framework for optimizing general performance metrics applicable to multi-label as well as collaborative filtering settings. Our work complements recent results in this direction: on the theoretical front, we derive strong regret bounds for practically used metrics like $F$-measure, and on the algorithmic front, we provide simple and efficient procedure that works well in practice.

\newpage
\bibliography{auc_matrixcompletion}
\newpage
\appendix
\section{Proofs}
\subsection{Proof of Theorem \ref{thm:mainres}}
\label{app:thm1}
Proof technique is based on \citep{kotlowski2015surrogate}, where they derive similar bound in the binary classification setting. We first relate the $\Psi$-regret to weighted 0-1 loss regret. Define the $\alpha$-weighted 0-1 loss $\ell_{\alpha}: \mathbb{R} \times \mathbb{R} \to [0,1]$ as:
\[ \ell_{\alpha}(\hat{y}, y) = \alpha[\![ y = 0 ]\!][\![ \hat{y} = 1 ]\!] + (1-\alpha) [\![ y = 1 ]\!] [\![\hat{y} = 0]\!], \]
Let $\Yh = f(\X)$ for some function $f$. The $\ell_{\alpha}$-risk of $f$ with respect to the underlying distribution over $\X, \Y$ and $\Omega$ is defined as:
\[ \risk_\alpha(\Yh) = \E[\ell_\alpha(\Yhij, \Yij)] = \alpha \fp(\Yh, \Y) + (1-\alpha)\fn(\Yh, \Y). \]
Define the Bayes optimal corresponding to the above risk: $f_\alpha^* = \arg\min_f \risk_\alpha(f(X), \Y)$. Let $\risk_{\alpha}^* := f_\alpha^*(\X)$. The $\ell_{\alpha}$-regret of $f$ is defined as:
\[ \reg_{\alpha}(f(\X)) := \risk_\alpha(f(\X)) - \risk_{\alpha}^*.\]
\begin{lemma}
\label{lem:weightedriskequiv}
Let $\Psi$ be a linear-fractional performance metric as defined in \eqref{eqn:family1}, \eqref{eqn:family2} or \eqref{eqn:family3}.  Then for $\alpha \in (0,1)$ defined as:
\begin{equation}
\alpha = \frac{\Psi^*c_2 - c_1}{\Psi^*c_2 - c_1 + \Psi^*d_2 - d_1},
\label{eqn:alpha}
\end{equation} where $c_{1}, d_{1}, c_{2}, d_{2}$ are constants that depend on $\Psi$, there exists some constant $C > 0$ such that, for any $f$:
\begin{align}
\Psi^* - \Psi(f(\X), \Y) \leq C (\emph{Risk}_\alpha(f(\X)) - \emph{Risk}_{\alpha}^*).
\label{eqn:weightedriskequiv}
\end{align}
\end{lemma}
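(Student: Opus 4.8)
The plan is to turn the $\Psi$-regret into the regret of a cost-sensitive (weighted $0$--$1$) classification problem, a reduction that is entirely at the population level (no concentration is needed here — that is deferred to the estimation of $\hat\theta$). The starting point is to eliminate $\tp$ and $\tn$. For the micro-averaged family \eqref{eqn:family1}, for every observed index exactly one primitive indicator fires, so $\tp+\fn=P$ and $\fp+\tn=1-P$, where $P:=\E\big[\tfrac1{|\Omega|}\sum_{(i,j)\in\Omega}\Yij\big]$ is the expected fraction of observed positives — and it is \emph{independent of the classifier}, since the $\Yhij$-dependence cancels in $\tph_{ij}+\fnh_{ij}$. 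Substituting $\tp=P-\fn$ and $\tn=(1-P)-\fp$ rewrites $\Psi$ as a ratio of two affine functions of $(\fp,\fn)$,
\[ \Psi(\Yh,\Y)=\frac{e_1+c_1\fp+d_1\fn}{e_2+c_2\fp+d_2\fn}, \]
with $e_1,c_1,d_1,e_2,c_2,d_2$ explicit in the $a$'s, $b$'s and $P$, and with denominator exceeding $\gamma>0$.

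\textbf{Core inequality.} Fix $f$, write $\Yh=f(\X)$, and let $\Yh^{*}=f^{*}(\X)$ attain $\Psi^{*}$. Then $\Psi^{*}-\Psi(\Yh,\Y)=N(\fp,\fn)/(e_2+c_2\fp+d_2\fn)$, where $N(\fp,\fn):=\Psi^{*}(e_2+c_2\fp+d_2\fn)-(e_1+c_1\fp+d_1\fn)$ is \emph{affine} in $(\fp,\fn)$. Since the denominator exceeds $\gamma>0$ and $\Psi(\Yh,\Y)\le\Psi^{*}$ for every $\Yh=f(\X)$, we get $N\ge0$ on all achievable pairs, with equality at the optimal pair. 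Now collect the linear part of $N$ as $\kappa\big(\alpha\fp+(1-\alpha)\fn\big)=\kappa\,\risk_\alpha(\Yh)$ by setting $\kappa:=(\Psi^{*}c_2-c_1)+(\Psi^{*}d_2-d_1)$ and $\alpha:=(\Psi^{*}c_2-c_1)/\kappa$, which is exactly \eqref{eqn:alpha}; for the metrics of interest $\kappa>0$ and $\alpha\in(0,1)$ (for $F_1$ one computes $\kappa=2$, $\alpha=\Psi^{*}/2$, recovering the classical optimal threshold). Then $N=(\Psi^{*}e_2-e_1)+\kappa\,\risk_\alpha(\Yh)$, and evaluating $N$ at $\Yh^{*}$ (where it vanishes) gives $\Psi^{*}e_2-e_1=-\kappa\,\risk_\alpha(\Yh^{*})$. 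Hence
\[ \Psi^{*}-\Psi(\Yh,\Y)\ \le\ \tfrac1\gamma\,N(\fp,\fn)\ =\ \tfrac\kappa\gamma\big(\risk_\alpha(\Yh)-\risk_\alpha(\Yh^{*})\big)\ \le\ \tfrac\kappa\gamma\big(\risk_\alpha(\Yh)-\risk_\alpha^{*}\big), \]
using $N\ge0$, $\kappa>0$, and $\risk_\alpha(\Yh^{*})\ge\risk_\alpha^{*}$ — this is \eqref{eqn:weightedriskequiv} with $C=\kappa/\gamma$.

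\textbf{Averaged families and the main obstacle.} For the instance-averaged family \eqref{eqn:family2} (resp. the macro-averaged family \eqref{eqn:family3}) I would run the same computation on each summand $\Psi_i$ (resp. $\Psi_j$), which is itself linear-fractional in $(\fp_i,\fn_i)$ after the per-block identities $\tp_i+\fn_i=P_i$, $\fp_i+\tn_i=1-P_i$. Since the Bayes-optimal classifier for these metrics thresholds the per-coordinate conditional probabilities (the fact quoted from~\citet{koyejo2015consistent}), the maximization decouples across instances (resp. labels), so $\Psi^{*}=\tfrac1n\sum_i\Psi_i^{*}$ (resp. $\tfrac1L\sum_j\Psi_j^{*}$); averaging the block-wise bounds and taking $C=\max_i\kappa_i/\gamma$ yields the claim. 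The step I expect to be the crux is precisely this: the weight produced block-wise, $\alpha_i=(\Psi_i^{*}c_2-c_1)/\kappa_i$, depends on $\Psi_i^{*}$ and therefore varies across blocks, so one does not literally obtain the single common $\alpha$ of the statement; the honest resolution is that the surrogate-regret step used downstream (Lemma~\ref{lem:thresregretbound}) is uniform over the weight in $(0,1)$, so it suffices to carry the average $\tfrac1n\sum_i\reg_{\alpha_i}(\cdot)$ through — only the micro case matches the stated form verbatim. A secondary point needing care is verifying $\kappa>0$ and $\alpha\in(0,1)$ from sign conditions on the coefficients $a$'s and $b$'s (immediate for $F_1$, accuracy, Jaccard), and that the maximum defining $\Psi^{*}$ is attained (the achievable set of $(\fp,\fn)$ is closed, each instance's prediction ranging over a finite set).
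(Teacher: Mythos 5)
Your treatment of the micro-averaged family is correct and is essentially the paper's argument: eliminate $\tp$ and $\tn$ via the classifier-independent positive rate, write $\Psi^*-\Psi$ as an affine numerator over a denominator bounded below by $\gamma$, recenter at the $\Psi$-optimal point, and read off $\alpha$ and $C=\kappa/\gamma$ (the paper, like you, tacitly assumes $\Psi^*c_2-c_1\ge 0$ and $\Psi^*d_2-d_1\ge 0$). Your final step $\risk_\alpha(\Yh^*)\ge\risk_\alpha^*$ to pass from the $\Psi$-optimizer to the $\ell_\alpha$-Bayes risk is also exactly what is needed.

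Where your proposal falls short is the instance- and macro-averaged families. You correctly identify the obstacle --- a block-wise reduction produces weights $\alpha_i=(\Psi_i^*c_2-c_1)/\kappa_i$ that in principle vary with $i$ --- but your proposed resolution (carrying the average $\tfrac1n\sum_i\reg_{\alpha_i}$ downstream) proves a different statement, not the lemma with its single common $\alpha$. The paper's resolution is that no block-wise variation actually occurs: the primitives $\tp_i,\fp_i,\ldots$ are defined as \emph{population expectations} over the iid draw of $\x_i$ and the conditional label distribution (this is Proposition 1 of \citet{koyejo2015consistent}), so $\tp_i=\tp$, $\fp_i=\fp$, etc., for every $i$, hence $A_i=A$, $B_i=B$, $\Psi_i^*=\Psi^*$, and each summand admits exactly the micro-averaged bound with the same $\alpha$ and the same $C$. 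Averaging the $n$ (or $L$) identical bounds then gives \eqref{eqn:weightedriskequiv} verbatim. With that observation substituted for your workaround, your argument matches the paper's proof.
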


Let $\ell: \{0,1\}\times \mathbb{R} \to  \mathbb{R}_+$ be a $\lambda$-strongly proper composite loss~\citep{reid2010composite}, such as the squared loss or the logistic. Given real-valued predictions $\Z \in \mathbb{R}^{n\times L}$, we now argue that there exists a thresholding $\text{Thr}_{\theta^*}(\Z) \in \{0,1\}^{n\times L}$ such that $\risk_\alpha(\text{Thr}_{\theta^*}(\Z), \Y)$ is bounded by the $\ell$-regret of a strongly proper loss $\ell$ (where $\text{Thr}$ operator is defined as in Step 2 of Algorithm \ref{algo:algo1}). 
\begin{lemma}
Let $\ell$ be a $\lambda$-strongly proper loss function, and $\alpha$ be defined as in \eqref{eqn:alpha}. Then, there exists $\theta^*$ s.t. 
\[ \emph{Reg}_\alpha(\text{Thr}_{\theta^*}(\Z)) \leq \sqrt{\frac{2}{\lambda}} \sqrt{\emph{\reg}_\ell(\Z)} \ . \]
\label{lem:thresregretbound}
\end{lemma}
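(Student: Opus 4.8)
The plan is to reduce the statement to a pointwise (per-entry) inequality about a single binary classification problem and then invoke the known regret transfer for $\lambda$-strongly proper losses. First I would recall that for a $\lambda$-strongly proper composite loss $\ell$ with link function $\psi$, the conditional $\ell$-regret at a point with true conditional probability $\eta$ and predicted score $z$ satisfies the standard bound $\big(\eta - \psi^{-1}(z)\big)^2 \le \tfrac{2}{\lambda}\,\mathrm{reg}_\ell(\eta, z)$, where $\mathrm{reg}_\ell(\eta,z)$ is the pointwise excess $\ell$-risk; this is the definition/characterization of strong properness from~\citet{agarwal2014surrogate}. Here $\eta_{ij} = \bbP(y_{ij}=1\mid\x_i)$ and the decomposable structure of $\reg_\ell$ over $(i,j)$ means $\reg_\ell(\Z) = \E_{(i,j)\sim\pi}\big[\mathrm{reg}_\ell(\eta_{ij}, \Z_{ij})\big]$.

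Next I would fix the threshold $\theta^*$ on the score scale to be the image under the link of the Bayes threshold $\delta^*$ for $\alpha$-weighted $0/1$ loss, namely $\theta^* = \psi(\delta^*)$ with $\delta^* = 1-\alpha$ (the optimal threshold for $\ell_\alpha$ is to predict $1$ exactly when $\eta > 1-\alpha$). With this choice, $\text{Thr}_{\theta^*}(\Z)_{ij} = [\![ \psi^{-1}(\Z_{ij}) \ge 1-\alpha ]\!]$, i.e. thresholding the score at $\theta^*$ is the same as thresholding the implied probability estimate $\hat\eta_{ij} := \psi^{-1}(\Z_{ij})$ at $1-\alpha$. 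The pointwise $\ell_\alpha$-regret of the plug-in classifier that thresholds $\hat\eta$ at $1-\alpha$ is the classical quantity bounded by $|\eta_{ij} - \hat\eta_{ij}|$ whenever the sign of $\eta_{ij}-(1-\alpha)$ disagrees with that of $\hat\eta_{ij}-(1-\alpha)$, and is $0$ otherwise: concretely $\ell_\alpha$-regret at $(i,j)$ equals $|\eta_{ij}-(1-\alpha)|\cdot[\![\text{misclassified}]\!] \le |\eta_{ij}-\hat\eta_{ij}|$. Taking expectation over $(i,j)\sim\pi$ and applying Jensen (concavity of $\sqrt{\cdot}$) gives
\[
\reg_\alpha(\text{Thr}_{\theta^*}(\Z)) \le \E\big[|\eta_{ij}-\hat\eta_{ij}|\big] \le \sqrt{\E\big[(\eta_{ij}-\hat\eta_{ij})^2\big]} \le \sqrt{\tfrac{2}{\lambda}\,\E[\mathrm{reg}_\ell(\eta_{ij},\Z_{ij})]} = \sqrt{\tfrac{2}{\lambda}}\sqrt{\reg_\ell(\Z)},
\]
which is exactly the claimed bound.

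The one subtlety — and the step I expect to be the main obstacle — is making the pointwise $\ell_\alpha$-regret argument fully rigorous across the different metric families, since for instance-averaged and macro-averaged metrics the weight $\alpha$ and hence $\theta^*$ come out of Lemma~\ref{lem:weightedriskequiv} as a single global constant, so I need to check that the same $\theta^*$ works simultaneously for the aggregated risk and that $\risk_\alpha$ genuinely decomposes as $\alpha\fp + (1-\alpha)\fn$ at the level needed (which it does, since $\fp$ and $\fn$ are themselves expectations of the indicator primitives over $\pi$). I also need the link $\psi$ to be invertible on $(0,1)$ and $\theta^*=\psi(1-\alpha)$ to be well-defined, which holds for the canonical strongly proper composite losses (logistic, exponential, squared) listed in the paper; a boundary case $\alpha\in\{0,1\}$ is excluded since Lemma~\ref{lem:weightedriskequiv} gives $\alpha\in(0,1)$. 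Modulo these bookkeeping checks, the proof is essentially the two displayed inequalities above — the decdecomposition over entries, the plug-in regret bound $\le |\eta-\hat\eta|$, Jensen, and the strong-properness inequality — chained together.
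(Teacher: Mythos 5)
Your argument is essentially the paper's proof with the black box opened: the paper simply cites \citet{kotlowski2015surrogate} for the pointwise inequality $\reg^{L}_{\alpha}(\eta_{ij},\hat{y}_{ij}) \leq \sqrt{2/\lambda}\,\sqrt{\reg^{L}_{\ell}(\eta_{ij},\Z_{ij})}$ at the threshold $\theta^*=\xi(\alpha)$ and then takes expectations with Jensen, while you reproduce exactly the two ingredients behind that citation (plug-in regret $\leq |\eta-\hat\eta|$, then strong properness), so the route is the same. One correction: under the paper's convention $\ell_\alpha(\hat y,y)=\alpha[\![y=0]\!][\![\hat y=1]\!]+(1-\alpha)[\![y=1]\!][\![\hat y=0]\!]$, comparing $\alpha(1-\eta)$ against $(1-\alpha)\eta$ gives the Bayes rule ``predict $1$ iff $\eta>\alpha$,'' so the probability threshold is $\delta^*=\alpha$ and $\theta^*=\psi(\alpha)$ (matching the paper's remark $\theta^*=\xi(\alpha)$), not $1-\alpha$ as you wrote; with that substitution your chain of inequalities, including $|\eta-\alpha|\cdot[\![\text{disagree}]\!]\le|\eta-\hat\eta|$, goes through unchanged.
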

Finally, we show that estimating $\hat{\theta}$ from training samples (Step 3 of Algorithm \ref{algo:algo1}) is sufficient for bounding the $\Psi$-regret.
\begin{lemma} 
\label{lem:estimatetheta}
We have:
\[ \max_{\theta} \Psi(\emph{Thr}_\theta (\Z), \Y) \geq \Psi(\emph{Thr}_{\theta^*} (\Z), \Y), \]\\
and 
\[ \max_{\theta} \Psi(\emph{Thr}_\theta (\Z_{\Omega}), \Y_{\Omega}) \geq \max_{\theta} \Psi(\emph{Thr}_{\theta} (\Z), \Y) - O\bigg(\frac{1}{\sqrt{|\Omega|}}\bigg).\]
\end{lemma}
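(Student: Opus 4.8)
The first inequality is immediate: $\hat\theta = \arg\max_\theta \Psi(\mathrm{Thr}_\theta(\Z_\Omega),\Y_\Omega)$ is not what appears here; rather the claim is that the population maximum over $\theta$ dominates the value at the particular choice $\theta^*$ from Lemma \ref{lem:thresregretbound}. This is trivially true since $\theta^*$ is one feasible choice in the maximization $\max_\theta \Psi(\mathrm{Thr}_\theta(\Z),\Y)$. So the only real content is the second inequality, which is a uniform-convergence (generalization) statement: the empirical objective $\theta \mapsto \Psi(\mathrm{Thr}_\theta(\Z_\Omega),\Y_\Omega)$, maximized over $\theta$, cannot be much smaller than the population objective $\theta \mapsto \Psi(\mathrm{Thr}_\theta(\Z),\Y)$ maximized over $\theta$. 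The plan is to prove the stronger statement $\sup_\theta \big| \Psi(\mathrm{Thr}_\theta(\Z_\Omega),\Y_\Omega) - \Psi(\mathrm{Thr}_\theta(\Z),\Y)\big| = O(1/\sqrt{|\Omega|})$ with high probability, from which the second inequality follows by evaluating both maximizers.

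The key observation reducing the problem is that $\mathrm{Thr}_\theta(\Z)$ depends on $\theta$ only through the induced partition of the fixed real numbers $\{\Z_{ij}\}$; sorting these values shows the map $\theta \mapsto (\tph(\mathrm{Thr}_\theta(\Z_\Omega),\Y_\Omega), \fph, \fnh, \tnh)$ is piecewise constant, taking at most $|\Omega|+1$ distinct values as $\theta$ ranges over $\mathbb{R}$ (for micro-averaged metrics; for instance- or macro-averaged metrics one works coordinate-wise and the counts are still low-complexity). So the supremum over $\theta$ is really a maximum over a set of cardinality at most $|\Omega|+1$. For each fixed threshold value, $\tph(\mathrm{Thr}_\theta(\Z_\Omega),\Y_\Omega)$ is an average of $|\Omega|$ i.i.d. bounded (indeed $\{0,1\}$-valued) random variables whose expectation is exactly the population quantity $\tp$ appearing in \eqref{eqn:family1} (and similarly for $\fp,\fn,\tn$), so Hoeffding's inequality gives a $\sqrt{\log(|\Omega|)/|\Omega|}$-type deviation for each count, and a union bound over the $O(|\Omega|)$ thresholds costs only a $\log|\Omega|$ factor, which is absorbed into the $O(1/\sqrt{|\Omega|})$ up to the usual hidden logarithmic dependence. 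Finally, since $\Psi$ is a linear-fractional function of these four counts with bounded coefficients and denominator bounded below by $\gamma>0$ (by the boundedness assumption on $\Psi$), it is Lipschitz in the counts, so the deviation of each count transfers to a deviation of $\Psi$ of the same order; the instance-averaged and macro-averaged cases \eqref{eqn:family2}, \eqref{eqn:family3} follow by averaging these bounds over $i$ (resp. $j$), again only affecting constants.

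Combining: on the high-probability event, for the empirical maximizer $\hat\theta$ and the population maximizer $\bar\theta$ we get $\max_\theta \Psi(\mathrm{Thr}_\theta(\Z_\Omega),\Y_\Omega) \ge \Psi(\mathrm{Thr}_{\bar\theta}(\Z_\Omega),\Y_\Omega) \ge \Psi(\mathrm{Thr}_{\bar\theta}(\Z),\Y) - O(1/\sqrt{|\Omega|}) = \max_\theta\Psi(\mathrm{Thr}_\theta(\Z),\Y) - O(1/\sqrt{|\Omega|})$, which is the second claim. I expect the main obstacle to be not any single estimate but the bookkeeping needed to handle all three metric families uniformly — in particular making precise the ``piecewise-constant in $\theta$ with $O(|\Omega|)$ pieces'' claim for the instance- and macro-averaged metrics, where the relevant partition is over each row's or column's observed indices, and checking that the subsampling of $\Omega$ from $\pi$ together with the label noise still makes each $\tph$ an unbiased, bounded-difference estimator of the corresponding population quantity so that the concentration step is legitimate.
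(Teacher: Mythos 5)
Your proposal is correct and follows essentially the same route as the paper, whose own ``proof'' consists only of declaring the first inequality trivial and deferring the second to Lemma 9 of the cited NIPS 2014 reference --- that lemma is precisely the uniform-convergence-over-thresholds argument you reconstruct (empirical confusion-matrix entries are unbiased bounded averages, the threshold class has $O(|\Omega|)$ effective behaviors, and the bounded-denominator assumption makes the linear-fractional $\Psi$ Lipschitz in those counts). The only caveats, which the paper also glosses over, are the hidden logarithmic factor from the union bound and the per-instance sample sizes in the instance-/macro-averaged cases, both of which you explicitly flag.
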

The proof of the Theorem is complete by chaining the above three Lemmas.  $\qed$
\begin{remark}
When $\Psi^*$ is known (in the noise-free or realizable setting, $\Psi^{*}$ is the maximum possible value of $\Psi$), we can get a closed form for $\theta^*$, which is $\theta^* = \xi(\alpha)$ where $\xi$ is the link function corresponding to the proper loss $\ell$. 
\end{remark}

\subsubsection{Proof of Lemma \ref{lem:weightedriskequiv}}
 Let $\Yh = f(\X)$. Consider the metric $\Psi$ from family \eqref{eqn:family1} for the moment. Define $A(\Yh) =  a_0 + a_{11} \tp + a_{01} \fp + a_{10} \fn + a_{00} \tn := c_1 \fp + d_1 \fn + e_1$ and $B(\Yh) = b_0 + b_{11} \tp + b_{01} \fp + b_{10} \fn + b_{00} \tn :=  c_2 \fp + d_2 \fn + e_2$ (for constants $c_1, c_2, d_1, d_2, e_1, e_2$ suitably defined), so that $\Psi(\Yh, \Y) = A(\Yh)/B(\Yh)$. Let $f^*$ denote the Bayes optimal attaining $\Psi^* = A^*/B^*$. We have:
\begin{align*}
\Psi^* - \Psi(\Yh, \Y) &= \frac{\Psi^*B(\Yh) - A(\Yh)}{B(\Yh)} \\
&= \frac{\Psi^*B(\Yh) - A(\Yh) - (\Psi^*B^* - A^*)}{B(\Yh)} \\
&= \frac{\Psi^*(B(\Yh) - B^*) - (A(\Yh) - A^*)}{B(\Yh)} \\
&= \frac{(\Psi^*c_2 - c_1) (\fp(\Yh, \Y) - \fp(f^*(\X), \Y)) + (\Psi^*d_2 - d_1) (\fn(\Yh, \Y)  - \fn(f^*(\X), \Y))}{B(\Yh)} \\
&\leq \frac{(\Psi^*c_2 - c_1) (\fp(\Yh, \Y) - \fp(f^*(\X), \Y))  + (\Psi^*d_2 - d_1)(\fn(\Yh, \Y) - \fn(f^*(\X), \Y)) }{\gamma} \\
&= C \big(\risk_\alpha(\Yh, \Y) - \risk_\alpha(f^*(\X), \Y)\big) \ .
\end{align*}
Assuming $(\Psi^*c_2 - c_1) \geq 0$ and $(\Psi^*d_2 - d_1) \geq 0$, the last equality follows by defining:
\begin{equation}
\alpha = \frac{\Psi^*c_2 - c_1}{\Psi^*c_2 - c_1 + \Psi^*d_2 - d_1}.
\end{equation} and $C = \frac{\Psi^*c_2 - c_1 + \Psi^*d_2 - d_1}{\gamma}$. The statement of the lemma follows. When $\Psi$ is a metric from family \eqref{eqn:family2}, we can apply Proposition 1 of \citep{koyejo2015consistent} to see that $\tp_i = \tp$, $\fp_i =\fp$ and so on (as the expectations are defined wrt $\tp_{ij}, \fp_{ij}$), which yields $\Psi^*$ is identical as in the micro-averaging case. So, the same regret bound applies as shown below: Define $A_i = a_0 + a_{11} \tp_i + a_{01} \fp_i + a_{10} \fn_i + a_{00} \tn_i = c_1 \fp_i + d_1 \fn_i + e_1$ and $B_i$ similarly. As before, let $\Psi^* = A^*/B^*$. So when $\Psi$ is of the form \eqref{eqn:family2},
\begin{align*}
\Psi^* - \Psi(\Yh, \Y) &= \frac{1}{n} \sum_{i=1}^n \frac{\Psi^*B_i(\Yh) - A_i(\Yh)}{B_i(\Yh)} \\
&= \frac{1}{n} \sum_{i=1}^n  \frac{\Psi^*B_i(\Yh) - A_i(\Yh) - (\Psi^*B^* - A^*)}{B_i(\Yh)} \\
&= \frac{1}{n} \sum_{i=1}^n \frac{\Psi^*(B_i(\Yh) - B^*) - (A_i(\Yh) - A^*)}{B_i(\Yh)} \\
&=\frac{1}{n} \sum_{i=1}^n \frac{(\Psi^*c_2 - c_1) (\fp_i(\Yh, \Y) - \fp(f^*(\X), \Y)) + (\Psi^*d_2 - d_1) (\fn_i(\Yh, \Y)  - \fn(f^*(\X), \Y))}{B_i(\Yh)} \\
&=\frac{1}{n} \sum_{i=1}^n \frac{(\Psi^*c_2 - c_1) (\fp(\Yh, \Y) - \fp(f^*(\X), \Y)) + (\Psi^*d_2 - d_1) (\fn(\Yh, \Y)  - \fn(f^*(\X), \Y))}{B_i(\Yh)} \\
&\leq \frac{(\Psi^*c_2 - c_1) (\fp(\Yh, \Y) - \fp(f^*(\X), \Y))  + (\Psi^*d_2 - d_1)(\fn(\Yh, \Y) - \fn(f^*(\X), \Y)) }{\gamma} \\
&= C \big( \risk_\alpha(\Yh, \Y) - \risk_\alpha(f^*(\X), \Y) \big)\ .
\end{align*}
which is identical to the bound for family \eqref{eqn:family1}. It is easy to see that \eqref{eqn:family3} also admits the above bound. Therefore, relation \eqref{eqn:weightedriskequiv} holds for all definitions of $\Psi$, with the same $\alpha$.

\subsubsection{Proof of Lemma \ref{lem:thresregretbound}}
Let $\Y, \Yh \in \{0,1\}^{n \times L}$. Note that for any $\ell$, $\risk_{\ell}(f)$ is defined as:
\[ \risk_{\ell}(f) = \E[\ell(\Yh_{ij}, \Y_{ij})] = \E_{\X \sim \bbP_\mathcal{\X}^{n}}\E_{(i,j)\sim \pi}\E_{\Y_{ij} \sim \bbP(.|\x_{i})}\ell(\Yh_{ij}, \Y_{ij}), \]
where $\pi$ denotes the sampling distribution over $(i,j)$ pairs.
Fix instance $i$ and label $j$. Let $\eta_{ij}$ denote the conditional probability of label $j$ of instance $i$ being 1, i.e. $\eta_{ij} = \bbP(\Y_{ij} = 1|\x_{i})$. For convenience, denote $\eta_{ij}$ simply by $\eta$.
Given $\eta \in [0,1]$, and $\hat{y} \in \{0,1\}$, consider the conditional $\ell_{\alpha}$-risk of $\hat{y}$:
\[ L_{\alpha}(\eta, \hat{y}) =  \alpha(1 - \eta)[[\hat{y} = 1]] + (1 - \alpha) \eta [[\hat{y}=0]],\]
and the corresponding conditional $\ell_{\alpha}$ regret of $\hat{y}$:
\[ \reg^{L}_{\alpha}(\eta, \hat{y}) =  L_{\alpha}(\eta, \hat{y}) - \min_{\hat{y}} L_{\alpha}(\eta, \hat{y}),\]
where we have: $\arg \min_{\hat{y}} L_{\alpha}(\eta, \hat{y}) = [[ \eta - \alpha ]].$\\
More generally, for a loss $\ell$, and a number $\hat{z}$, we have:
\[ L_{\ell}(\eta, \hat{z}) =  \ell(\hat{z}, 1) \eta + \ell(\hat{z}, 0) (1-\eta),\]
and
\[ \reg^{L}_{\ell}(\eta, \hat{z}) =  L_{\ell}(\eta, \hat{z}) - \min_{\hat{z}} L_{\ell}(\eta, \hat{z}).\]
Now, observe that:
\[ \risk_\alpha(\Yh, \Y) = \E_{\X \sim \bbP_\mathcal{\X}^{n}}\E_{(i,j)\sim \pi} L_{\alpha}(\eta_{ij}, \Yh_{ij}), \]
and 
\[ \reg_\alpha(\Yh, \Y) = \E_{\X \sim \bbP_\mathcal{\X}^{n}}\E_{(i,j)\sim \pi} \reg^{L}_{\alpha}(\eta_{ij}, \Yh_{ij}), \]
where the last equality follows from the fact that the Bayes optimal $f_{\alpha}^{*}$ of the $\ell_{\alpha}$-risk minimizes the conditional $L_{\alpha}(\eta_{ij}, .)$ risk for each $(i,j)$.
Let $\Z = f(\X) \in \mathbb{R}^{n\times L}$ denote real-valued predictions obtained using some function $f$. Using the same arguments as by \citet{kotlowski2015surrogate}, we can show that, by setting threshold $\theta^* = \xi(\alpha)$, where $\xi$ is the monotonic link function corresponding to $\lambda$-strongly proper loss $\ell$, and $\alpha$ is defined as in \eqref{eqn:alpha}, the conditional $\ell_{\alpha}$ regret of $\Yh_{ij} = [[ \Z_{ij} \geq \theta^* ]]$ for a fixed $(i,j)$ can be bounded as:
\[ \reg^{L}_{\alpha}(\eta_{ij}, \Yh_{ij}) \leq \sqrt{\frac{2}{\lambda}} \sqrt{\reg^{L}_{\ell}(\eta_{ij}, \Z_{ij})}, \]

Taking expectation wrt sampling distribution $\pi$ and the distribution over instances $\bbP_\mathcal{\X}^{n}$ on both the sides of the above inequality, and applying Jensen's inequality, the statement of the Lemma follows.

\subsubsection{Proof of Lemma \ref{lem:estimatetheta}}
The first part of the lemma is trivially true. For the second part, we can apply the same arguments as in Lemma 9 of \cite{nagaNIPS14}.

\subsection{Proof of Theorem \ref{thm:lafondbased}}
\label{app:lafondbased}
The following theorem bounds the error of the estimator $\hat{\W} \in \mathbb{R}^{n \times L}$ in this model, via the result by \citet{lafond2015}.
\begin{theorem} [~\citet{lafond2015}]
Assume $\pi$ is uniform, and consider the 1-bit matrix completion sampling model \eqref{eqn:samplingmodelmatcomp}. Let $\hat{\W}$ be the solution to the trace-norm regularized optimization problem \eqref{eqn:objective} using logistic loss for $\ell$ (with input $\X$ assumed to be identity matrix of size $n$), number of observations $|\Omega| \geq \log(n+L)\min(n,L)\max(c'_{\gamma}\log^2(c''_{\gamma}\sqrt{\min(n,L)}, 1/9)$,  and setting the regularization parameter
$\lambda =  2c_\gamma\sqrt{\frac{2\log(n+L)}{\min(n,L)|\Omega|}}$.
Then, with probability at least $1 - 3(n + L)^{-1}$, the following holds:
\[ \frac{\|\hat{\W} - \W^*\|^2_F}{nL} \leq \tilde{C} \max\bigg(\frac{\max(n,L) \rk(\W^*)\log (n+L)}{|\Omega|} \bigg(\sigma_\gamma^2 +1 \bigg), \frac{\gamma^2}{\mu}\sqrt{\frac{\log(n + L)}{|\Omega|}}\bigg), \]
where $\tilde{C}, c_\gamma, c'_\gamma, c''_\gamma, \sigma_\gamma$ are numerical constants.
\label{thm:lafond}
\end{theorem}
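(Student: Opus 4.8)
This statement is imported from \citet{lafond2015}, so the shortest proof is to cast $1$-bit matrix completion as a special case of low-rank matrix completion with exponential-family noise and then quote their main bound; I would also record the skeleton of their argument, since the precise form of the bound and the constants $c_\gamma, c'_\gamma, c''_\gamma, \sigma_\gamma$ fall out of it. For the reduction I would observe that \eqref{eqn:samplingmodelmatcomp} with $g(t) = e^t/(1+e^t)$ makes each observed $\Y_{ij}$, $(i,j) \in \Omega$, a member of the exponential family $h(y)\exp(y\,\W^*_{ij} - G(\W^*_{ij}))$ with $h \equiv 1$, $G(t) = \log(1+e^t)$; its negative log-likelihood is exactly the logistic loss appearing in \eqref{eqn:objective}, and with $\X = I$ the program \eqref{eqn:objective} is verbatim the estimator of \citet{lafond2015}, including the box constraint $\|\W\|_\infty \le \gamma$ with $\gamma = \max_{ij}|\W^*_{ij}|$. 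Uniform $\pi$ is their incoherence regime with $\mu = 1$. Their theorem then applies unchanged, giving the displayed bound under the stated hypothesis on $|\Omega|$ and the stated $\lambda = 2 c_\gamma \sqrt{2\log(n+L)/(\min(n,L)|\Omega|)}$.

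For a self-contained argument I would reproduce the three standard steps. First, pick $\lambda$ large enough to dominate $\|\nabla \mathcal{L}_\Omega(\W^*)\|_{\mathrm{op}}$, the operator norm of the gradient of the rescaled empirical negative log-likelihood at the truth; since this gradient is a sum of independent rank-one terms $(\Y_{ij} - G'(\W^*_{ij}))\,e_i e_j^\top$ over $(i,j) \in \Omega$ with $|\Y_{ij} - G'(\W^*_{ij})| \le 1$ and curvature/variance proxy $\sigma_\gamma$ on $[-\gamma,\gamma]$, a matrix Bernstein inequality gives the scale $\sqrt{\log(n+L)/(\min(n,L)|\Omega|)}$ matching the prescribed $\lambda$. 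Second, a restricted-strong-convexity bound: on the cone of approximately-rank-$\rk(\W^*)$ matrices $\Delta$ with $\|\Delta\|_\infty \le 2\gamma$, the Bregman divergence of $\mathcal{L}_\Omega$ at $\W^*$ is bounded below by $c\,\sigma_\gamma^{-2}\|\Delta\|_F^2/(nL)$ minus a lower-order slack, obtained by combining the strong convexity of $G$ on $[-\gamma,\gamma]$ with a contraction/symmetrization bound on the Rademacher complexity of low-rank matrices sampled on $\Omega$; this slack is what produces the $\gamma^2\sqrt{\log(n+L)/|\Omega|}$ term in the final bound. Third, the usual oracle-inequality manipulation: use optimality of $\hat\W$ for the penalized objective together with the first step, split $\Delta = \hat\W - \W^*$ along the top-$\rk(\W^*)$ singular subspaces of $\W^*$, bound the nuclear norm of the component on that subspace by $\sqrt{2\,\rk(\W^*)}\|\Delta\|_F$, and solve the resulting quadratic inequality in $\|\Delta\|_F$, yielding the $\max(n,L)\,\rk(\W^*)\log(n+L)/|\Omega|$ rate.

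The step I expect to be the real obstacle is the second one: converting the pointwise lower curvature of the logistic negative log-likelihood into a uniform-over-the-cone lower bound for the \emph{empirical} loss under uniform sampling of $\Omega$ is the sharpest concentration argument in the proof, and controlling its failure probability (inside the overall $3(n+L)^{-1}$ budget) and its lower-order term is precisely what forces the hypotheses on $|\Omega|$; the Bernstein step and the oracle-inequality algebra are routine once it is in place. Finally, this theorem is the only nontrivial input to the proof of Theorem~\ref{thm:lafondbased}: the remainder is the short estimate $\reg_\ell(\Z) \le \sqrt{\|\hat\W-\W^*\|_F^2/(nL)}$, which follows by separating the uniform-$\pi$ population $\ell$-risk across coordinates, using that the per-entry Bayes-optimal score for the logistic loss equals $\W^*_{ij}$ and that $\ell$ is $1$-Lipschitz in its first argument, and applying Cauchy--Schwarz, together with restating the high-probability bound with a free parameter $\delta$ in place of $3(n+L)^{-1}$ (so that $\log(n+L)$ becomes $\log(3/\delta)$).
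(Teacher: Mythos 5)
Your proposal matches the paper exactly: Theorem~\ref{thm:lafond} is simply imported from \citet{lafond2015} with the logistic link cast as the exponential family $G(t)=\log(1+e^t)$ and $\X$ set to the identity, and the paper offers no independent proof beyond this citation. Your three-step sketch of Lafond's internal argument (matrix Bernstein for $\lambda$, restricted strong convexity over the low-rank cone, oracle inequality) is also the same skeleton the paper itself follows when it generalizes this result to the inductive setting in Appendix~\ref{app:generalexponential}, so there is nothing to flag.
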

The above theorem can be extended to general distributions $\pi$ satisfying Assumption \ref{assume:omegadist}. See~\citet{lafond2015} for more details. Now, we use the fact that $\ell$ is 1-Lipschitz  (say, by choosing logistic loss), and bound $\E[\ell(\hat{\W}_{ij}, \Y_{ij}) - \ell(\W^*_{ij}, \Y_{ij})] \leq \frac{1}{nL}\sum_{ij} |\hat{\W}_{ij} - 
\W^*_{ij}|$. Observing that $\|\hat{\W} - \W^*\|_1 \leq \sqrt{nL}\|\hat{\W} - \W^*\|_F$, and combining with the bound in Theorem \ref{thm:lafond}, the proof is complete.

\subsection{Weakness of using \citet{lafond2015} for Multi-label Learning}
\label{sec:weaknesslafond}
In the multi-label learning model \eqref{eqn:samplingmodel}, one could hope to directly apply the analysis of~\citet{lafond2015} for recovering $\X\W^* \in \mathbb{R}^{n \times L}$, and in turn, $\W^* \in \mathbb{R}^{d \times L}$. In lieu of problem~\eqref{eqn:objective}, we would then solve the optimization problem in \citet{lafond2015}:
\begin{equation}
\hat{\W} = \arg \min_{\W: \|\X\W\|_\infty \leq \gamma} \frac{1}{|\Omega|} \sum_{(i,j) \in \Omega} \ell (\inner{\x_{i}}{\w_{j}}, \Y_{ij}) + \lambda \|\X\W\|_{*}
\label{eqn:lafondopt}
\end{equation}
Note that the only difference is how the trace-norm regularization is performed: $\|\X\W\|_*$ versus our proposed $\|\W\|_*$ in Algorithm \ref{algo:algo1}. The following corollary of Theorem \ref{thm:lafond} provides a bound for the recovery error of $\hat{\W}$.
\begin{corollary} 
Assume \ref{assume:features}, $\pi$ is uniform, and consider the sampling model \eqref{eqn:samplingmodel}. Let $\hat{\W}$ be the solution to the trace-norm regularized optimization problem \eqref{eqn:lafondopt} using logistic loss for $\ell$, number of observations $|\Omega| \geq \log(n+L)\min(n,L)\max(c'_{\gamma}\log^2(c''_{\gamma}\sqrt{\min(n,L)}, 1/9)$,  and setting the regularization parameter
$\lambda =  2c_\gamma\sqrt{\frac{2\log(n+L)}{\min(n,L)|\Omega|}}$.
Then, with probability at least $1 - 3(n + L)^{-1}$, the following holds:
\[ \frac{\|\hat{\W} - \W^*\|^2_F}{dL} \leq \frac{\tilde{C}}{d} \max\bigg(\frac{\max(n,L) \rk(\W^*)\log (n+L)}{|\Omega|} \bigg(\sigma_\gamma^2 +1 \bigg), \frac{\gamma^2}{\mu}\sqrt{\frac{\log(n + L)}{|\Omega|}}\bigg), \]
where $\tilde{C}, c_\gamma, c'_\gamma, c''_\gamma, \sigma_\gamma$ are numerical constants.
\label{thm:lafondcorollary}
\end{corollary}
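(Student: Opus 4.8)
\textbf{Proof plan for Corollary \ref{thm:lafondcorollary}.} The plan is to read \eqref{eqn:lafondopt} as an instance of the plain $1$-bit matrix completion program of Theorem \ref{thm:lafond} whose unknown parameter matrix is $M^* := \X\W^* \in \mathbb{R}^{n\times L}$, invoke Theorem \ref{thm:lafond} to control $\X\hat\W - \X\W^*$, and then transfer that bound to one on $\hat\W - \W^*$ using that $\X$ is invertible on its column space when $n \gtrsim d$.

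First I would substitute $M = \X\W$ in \eqref{eqn:lafondopt}. This turns it into exactly the objective of Theorem \ref{thm:lafond}: the empirical logistic risk $\frac{1}{|\Omega|}\sum_{(i,j)\in\Omega}\ell(M_{ij},\Y_{ij})$ plus $\lambda\|M\|_*$, under the entrywise constraint $\|M\|_\infty\le\gamma$, with the $\Y_{ij}$, $(i,j)\in\Omega$, drawn from the $1$-bit model \eqref{eqn:samplingmodelmatcomp} with parameter matrix $M^*$ (which is what \eqref{eqn:samplingmodel} becomes in the variable $M$). The only discrepancy with Theorem \ref{thm:lafond} is that $M$ here is confined to the column space of $\X$; since $M^*$ lies in that subspace, both ingredients of the \citet{lafond2015} argument survive the restriction — the basic inequality (penalized empirical risk of $\hat M$ no larger than that of $M^*$) and the restricted-strong-convexity/peeling step, whose cone of descent directions only shrinks. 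Using $\rk(M^*)\le\rk(\W^*)$ and $\|M^*\|_\infty = \|\X\W^*\|_\infty\le\gamma$, Theorem \ref{thm:lafond} then gives, with probability $\ge 1 - 3(n+L)^{-1}$,
\[ \frac{\|\X\hat\W - \X\W^*\|_F^2}{nL} \;\le\; \tilde C \max\!\left(\frac{\max(n,L)\,\rk(\W^*)\log(n+L)}{|\Omega|}\big(\sigma_\gamma^2+1\big),\; \frac{\gamma^2}{\mu}\sqrt{\tfrac{\log(n+L)}{|\Omega|}}\right). \]

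Next I would invert $\X$. Under Assumption \ref{assume:features} with $n\ge C'd$, standard concentration for a matrix with iid sub-Gaussian rows gives $\sigma_{\min}(\X)\ge c_0\sqrt{n}$ (with $c_0$ depending on $\lambda_{\min}(\Sigma)$ and $K$) on an event of probability $1-o(1)$, which I intersect with the event above. On this event $\X$ has full column rank $d$, so for any $V\in\mathbb{R}^{d\times L}$ with columns $v_j$, $\|\X V\|_F^2 = \sum_j\|\X v_j\|_2^2 \ge \sigma_{\min}(\X)^2\|V\|_F^2\ge c_0^2 n\|V\|_F^2$; taking $V=\hat\W - \W^*$ yields $\|\hat\W-\W^*\|_F^2\le (c_0^2 n)^{-1}\|\X\hat\W - \X\W^*\|_F^2$. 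Dividing by $dL$ and substituting the previous display,
\[ \frac{\|\hat\W-\W^*\|_F^2}{dL} \;\le\; \frac{1}{c_0^2 d}\cdot\frac{\|\X\hat\W - \X\W^*\|_F^2}{nL} \;\le\; \frac{\tilde C}{c_0^2\,d}\max\!\left(\frac{\max(n,L)\,\rk(\W^*)\log(n+L)}{|\Omega|}\big(\sigma_\gamma^2+1\big),\; \frac{\gamma^2}{\mu}\sqrt{\tfrac{\log(n+L)}{|\Omega|}}\right), \]
and renaming $\tilde C/c_0^2$ as $\tilde C$ gives the claim.

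The only genuinely non-routine point is verifying, in the first step, that confining the matrix-completion variable to $\mathrm{col}(\X)$ does not invalidate the \citet{lafond2015} guarantee; this reduces to the observation that their proof uses only optimality of $\hat M$ against the (feasible) truth $M^*$ together with a restricted-eigenvalue-type condition that a subspace inherits. The sub-Gaussian lower bound on $\sigma_{\min}(\X)$ and the bookkeeping of the (lower-order) extra failure probability are standard. It is worth stressing that the $1/d$ prefactor emerging here is exactly what makes the bound weak: with $\max(n,L)=n$ and $|\Omega|=O(n)$ the right-hand side is $\Theta\!\big(\rk(\W^*)\log(n+L)/d\big)$ and does not vanish, in contrast to Theorem \ref{thm:logisticwithfeatures}, which carries $L$ rather than $n$ in the numerator.
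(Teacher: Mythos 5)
Your proposal is correct and is exactly the derivation the paper intends: read \eqref{eqn:lafondopt} as the \citet{lafond2015} program in the variable $M=\X\W$ with truth $M^*=\X\W^*$ (using $\rk(M^*)\le\rk(\W^*)$ and $\|M^*\|_\infty\le\gamma$), apply Theorem \ref{thm:lafond}, and then convert $\|\X(\hat\W-\W^*)\|_F^2/(nL)$ into $\|\hat\W-\W^*\|_F^2/(dL)$ via $\sigma_{\min}(\X)^2\gtrsim n$ (the paper's Lemma \ref{lem:sigmaxbound}), which is where the $1/d$ prefactor comes from. If anything, you are more careful than the paper, which states the corollary without proof and glosses over both the restriction of the feasible set to $\mathrm{col}(\X)$ and the extra $2e^{-d}$ failure probability from the singular-value concentration.
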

When $n \geq L$ and $|\Omega| = O(n)$, which is quite common in multi-label scenario, the above bound suggests that $\hat{\W}$ from \eqref{eqn:lafondopt} is not even a consistent estimator, even when $\pi$ is uniform.  

\subsection{Proof of Theorem \ref{thm:logisticwithfeatures}}
The statement is a corollary of the more general Theorem \ref{thm:corollary}, proved in Appendix \ref{app:generalexponential}. We can compute the constants for the logistic loss as: $\bar{\sigma}_{\gamma} \leq 1$ and $\underline{\sigma}_{\gamma} \geq \frac{(1+e^{\gamma})^{2}}{e^{-\gamma}}$, over the domain $[-\gamma, \gamma]$.

\subsection{Proof of Theorem~\ref{thm:pucorollary}}
The following result by ~\citep{hsieh2015pu} gives recovery bound for the resulting estimator $\hat{\W}$, as described in the text (Section \ref{sec:pulearning}).
\begin{theorem}[\citep{hsieh2015pu}]
With probability at least $1 - 2(n+L)^{-1}$,
\[ \frac{\|\hat{\W} - \W^*\|^2_F}{nL} \leq  6\frac{\sqrt{\log(n+L)}}{\sqrt{nL}(1-\rho)} + 2 C \ . \ t \frac{\sqrt{n} + \sqrt{L}}{(1-\rho)nL}, \]
\end{theorem}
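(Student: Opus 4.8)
The plan is to read the final statement as a standard constrained $M$-estimation (empirical risk minimization) guarantee for the unbiased PU objective, following \citet{hsieh2015pu}, in the pure matrix-completion regime where $\X$ is the identity, $\W^* \in \R^{n \times L}$, and all $nL$ entries enter the empirical sum. Write $\widehat{R}(\W) = \frac{1}{nL}\sum_{ij}\tilde{\ell}(\W_{ij}, \tilde{\Y}_{ij})$ for the empirical unbiased risk and $R(\W) = \E[\ell(\W_{ij}, \Y_{ij})]$ for the clean population risk. The defining property of the corrected loss, $\E_{\text{flip}}[\tilde{\ell}(\W_{ij}, \tilde{\Y}_{ij})] = \ell(\W_{ij}, \Y_{ij})$, gives $\E[\widehat{R}(\W)] = R(\W)$ after also averaging over the clean labels. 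Since $\|\W^*\|_* \le t$, both $\W^*$ and $\hat{\W}$ are feasible, so optimality of $\hat{\W}$ yields the basic inequality $\widehat{R}(\hat{\W}) \le \widehat{R}(\W^*)$, whence the excess population risk obeys $R(\hat{\W}) - R(\W^*) \le 2\sup_{\|\W\|_* \le t}|\widehat{R}(\W) - R(\W)|$.

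Next I would control the uniform deviation by symmetrization and the Ledoux--Talagrand contraction principle. Because $\ell$ is $1$-Lipschitz, the PU-corrected loss $\tilde{\ell}(\cdot, \tilde{y})$ is $\frac{1}{1-\rho}$-Lipschitz in its real argument, so contraction pulls out a factor $\frac{1}{1-\rho}$ and leaves the Rademacher complexity of the nuclear-norm ball, $\E\sup_{\|\W\|_* \le t}\frac{1}{nL}\sum_{ij}\epsilon_{ij}\W_{ij} = \frac{t}{nL}\E\|\Xi\|_2$, where $\Xi = (\epsilon_{ij})$ is a random sign matrix. The standard operator-norm bound $\E\|\Xi\|_2 \le C(\sqrt{n} + \sqrt{L})$ produces exactly the second displayed term $2Ct\frac{\sqrt{n} + \sqrt{L}}{(1-\rho)nL}$. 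To pass from expectation to high probability I would apply McDiarmid's bounded-difference inequality to $\sup_{\|\W\|_* \le t}|\widehat{R}(\W) - R(\W)|$ as a function of the $nL$ independent flips (and clean labels); each coordinate perturbs $\widehat{R}$ by $O\big(\frac{1}{nL(1-\rho)}\big)$, so the deviation is $O\big(\frac{\sqrt{\log(n+L)}}{\sqrt{nL}(1-\rho)}\big)$, which gives the first term. A union bound over the operator-norm event and the concentration event yields the stated probability $1 - 2(n+L)^{-1}$.

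Finally I would convert excess risk into Frobenius error. Since $\ell$ is a proper composite loss and the model is well-specified, $\W^*$ is the unconstrained minimizer of $R$, so $\nabla R(\W^*) = 0$; the strong properness of $\ell$ (equivalently, a uniform lower bound on the curvature of the conditional Bayes risk over the relevant bounded domain) supplies a local strong-convexity inequality $R(\hat{\W}) - R(\W^*) \ge \frac{c}{nL}\|\hat{\W} - \W^*\|_F^2$. Chaining this with the excess-risk bound from the previous step and absorbing $c$ into the numerical constants delivers the claimed bound on $\|\hat{\W} - \W^*\|_F^2/(nL)$.

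The hard part will be the uniform-deviation step. Two issues require care: (i) the PU-corrected loss is a difference of convex losses, hence need not be convex, and its range must be controlled to apply McDiarmid --- this is where the nuclear-norm constraint and, implicitly, a bound on $\|\W\|_\infty$ over the feasible set are used, and the inflation by $\frac{1}{1-\rho}$ must be tracked exactly to reproduce the constants $6$ and $2C$; and (ii) the strong-convexity conversion demands a curvature lower bound that is uniform over the region where $\hat{\W}$ lives, which again leans on boundedness of the entries. Handling these boundedness and curvature arguments cleanly, while keeping the Rademacher and concentration contributions separate so that they land on the two displayed terms, is the delicate part; the operator-norm and contraction estimates themselves are routine.
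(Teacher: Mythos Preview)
The paper does not prove this statement at all: it is imported verbatim from \citet{hsieh2015pu} and invoked as a black-box ingredient inside the proof of Theorem~\ref{thm:pucorollary}. The only sentence the authors add is ``The proof is complete by using the same argument for $1$-Lipschitz $\ell$ as in the proof of Theorem~\ref{thm:lafondbased},'' and that sentence refers to finishing Theorem~\ref{thm:pucorollary}, not to proving the quoted bound. So there is no in-paper proof to compare your sketch against.

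That said, your reconstruction is a reasonable outline of the standard PU/nuclear-norm ERM analysis: the basic inequality $\widehat{R}(\hat{\W})\le\widehat{R}(\W^*)$ from feasibility of $\W^*$, symmetrization plus Ledoux--Talagrand contraction over $\{\|\W\|_*\le t\}$ to produce $\frac{t}{nL}\E\|\Xi\|_2$ and hence the $2Ct\frac{\sqrt{n}+\sqrt{L}}{(1-\rho)nL}$ term, and a bounded-differences concentration step for the $\frac{\sqrt{\log(n+L)}}{\sqrt{nL}(1-\rho)}$ term. The one place to be careful is your last step: converting excess population risk to $\|\hat{\W}-\W^*\|_F^2/(nL)$ via strong convexity of $R$. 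In the setting of \citet{hsieh2015pu} the working loss is squared loss, so this conversion is trivial with constant~$1$ and no extra curvature assumption is needed; for a general $\lambda$-strongly proper $\ell$ you would indeed need the uniform entrywise bound you flag, and the numerical constants ($6$ and $2C$) would not survive unchanged. In short, your plan is sound as a proof of the underlying result, but the present paper simply cites it rather than proving it.
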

where $C$ is absolute constant and $\|\W^*\|_* \leq t$.
The proof is complete by using the same argument for $1$-Lipschitz $\ell$ as in the proof of Theorem \ref{thm:lafondbased}.
\newpage
\section{Sampling from Exponential Distribution}
\label{app:generalexponential}
We now consider the generalized matrix completion problem when the values are sampled iid from an exponential distribution parameterized by the input features $\x \in \mathbb{R}^d$. This setting extends that of \citet{lafond2015}. Let $y_{ij} \in \mathbb{R}$ denote a random sample corresponding to the user $i$ and label $j$, which is distributed as:
\begin{equation}
y_{ij} | \x_i, \w_j \sim \exp_{h, G}(\x_i, \w_j) := h(y_{ij}) \exp\big(\inner{\x_i}{\w_j}y_{ij} - G(\inner{\x_i}{\w_j})\big). 
\label{eqn:exponentialdist}
\end{equation}
where $\inner{\x_i}{\w_j}$, $i = 1,2,\dots,n$ and $j = 1,2,\dots,L$ are the canonical parameters, $h$ and $G$ are the base measure and log-partition functions associated with this canonical representation.

Let $\W^* \in \mathbb{R}^{d \times L}$ denote the ground-truth parameter matrix with $\w_j$'s as columns. Similarly, let $\Y \in \mathbb{R}^{n\times L}$  (with entries $y_{ij}$) denote a random sample from $\X\W^*$. As in the standard matrix completion setting, we only observe values of $\Y$ corresponding to a set of indices $\Omega$ sampled iid from a fixed distribution.

\paragraph{Notation.} With a slight abuse, we will continue to use $\inner{.}{.}$ when the arguments are matrices, instead of the \textbf{trace} operator, i.e. for matrices $A$ and $B$ of appropriate dimensions, $\inner{A}{B} := \textbf{trace}(A^TB)$. Let $\|A\|_\infty = \max_{ij}|A_{ij}|$, $\|A\|_F = \sqrt{\sum_{ij} A_{ij}^2}$,  $\|A\|_*$ denote the trace norm (sum of singular values of $A$), $\sigma_{\max}(A) = \|A\|_2$ denote the operator norm (maximum singular value of $A$), and $\sigma_{\min}(A)$ denote the smallest singular value.

\subsection*{Maximum Log-likelihood Estimator.} We consider the negative log-likelihood of the observations, given by:
\[ \Phi_Y(\X, \W) = -\frac{1}{|\Omega|}\sum_{(i,j) \in |\Omega|} y_{ij}\inner{\x_i}{\w_j}  - G(\inner{\x_i}{\w_j}). \]
Constrained ML estimator is obtained as:
\begin{equation} \hat{\W} := \arg \min_{\W: \|\X\W\|_{\infty} \leq \gamma} \Phi^{\lambda}_Y(\X, \W) := \Phi_Y(\X, \W) + \lambda \|\W\|_{*} 
\label{eqn:MLE}
\end{equation}

\begin{assumption}
\label{assume:G}
\begin{enumerate}
\item The function $G(x)$ is twice differentiable and strongly convex on $[-\gamma, \gamma]$, such that there exists constants $\bar{\sigma}_{\gamma} > 0$ and $\underline{\sigma}_{\gamma} > 0$ satisfying:
\[ \underline{\sigma}_{\gamma}^2 \leq G''(x) \leq \bar{\sigma}_{\gamma}^2, \]
for any $x \in [-\gamma, \gamma]$. 
\item There exists a constant $\delta_{\gamma} > 0$ such that for all $x \in [-\gamma, \gamma]$ and $y \sim \exp_{h,G}(x)$:
\[ \E_{y \sim \bbP(.|x)}\bigg[ \exp\bigg( \frac{|y - G'(x)|}{\delta_\gamma} \bigg)\bigg] \leq e. \]
\end{enumerate}
\end{assumption}

\begin{definition}
Given convex function $G(x)$ define the Bregman divergence between two scalars $x, x' \in \mathbb{R}$ as:
\begin{equation}
d_G(x, x') = G(x) - G(x') - G'(x')(x-x').
\label{eqn:bregdiv}
\end{equation}
\end{definition}

\begin{remark}
Under Assumption \ref{assume:G}.1, for any $x, x' \in [-\gamma, \gamma]$, the Bregman divergence $G$ satisfies:
\begin{equation} \underline{\sigma}_{\gamma}^2 (x-x')^2 \leq 2 d_G(x, x') \leq \bar{\sigma}_{\gamma}^2 (x-x')^2. 
\label{eqn:bregsqrel}
\end{equation}
\end{remark}

Let $E_{ij} \in \mathbb{R}^{n \times L}$ denote the indicator matrix with zeros everywhere except at $(i,j)$ where it is 1. For $(\epsilon_{ij})_{ij=1}^{|\Omega|}$ a Rademacher sequence independent from $(\Omega, Y_\Omega)$, define:
\begin{equation}
\Sigma_R := \frac{1}{|\Omega|} \sum_{(i,j)\in \Omega} \epsilon_{ij} E_{ij} .
\label{eqn:sigmar}
\end{equation}
\begin{theorem}
Assume \ref{assume:G}.1, \ref{assume:omegadist}.1, $\|\X\W^*\|_{\infty} \leq \gamma$, $\sigma_{\min}(\X) > 0$ and $2\|\X^T\nabla\Phi_\Y(\X, \W^*)\|_2 \leq \lambda$. Then, with probability at least $1 - 2(n + L)^{-1}$, the following holds:
\[ \frac{\|\hat{\W} - \W^*\|^2_F}{dL} \leq \frac{C\mu^2n}{\sigma^2_{min}(\X)\ .\ d} \max\bigg(L \rk(\W^*) \bigg(\frac{\lambda^2}{\underline{\sigma}_\gamma^4}\frac{n}{\sigma_{\min}^2(X)} + d \big(\ \E\|\Sigma_R\|_2\big)^2\bigg), \frac{\gamma^2}{\mu}\sqrt{\frac{\log(n + L)}{|\Omega|}}\bigg), \]
where $C$ is a numerical constant and $\Sigma_R$ is defined as in \eqref{eqn:sigmar}.
\label{thm:main1}
\end{theorem}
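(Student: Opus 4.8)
The plan is to run the standard argument for trace-norm-penalized $M$-estimation (in the style of \citet{lafond2015} and Negahban--Wainwright), but tracking the extra distortion introduced by observing $\X\W$ rather than $\W$. Write $\Delta := \hat{\W} - \W^*$ and set $D := \Phi_\Y(\X,\hat{\W}) - \Phi_\Y(\X,\W^*) - \inner{\X^T\nabla\Phi_\Y(\X,\W^*)}{\Delta}$, which, since $\Phi_\Y$ is the negative log-likelihood of an exponential family, equals $\frac1{|\Omega|}\sum_{(i,j)\in\Omega} d_G(\inner{\x_i}{\hat{\w}_j},\inner{\x_i}{\w^*_j}) \ge 0$. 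First I would use optimality of $\hat{\W}$ for \eqref{eqn:MLE} (the basic inequality $\Phi^{\lambda}_\Y(\X,\hat{\W})\le\Phi^{\lambda}_\Y(\X,\W^*)$) to get $D \le \inner{\X^T\nabla\Phi_\Y(\X,\W^*)}{-\Delta} + \lambda(\|\W^*\|_* - \|\hat{\W}\|_*)$. Bounding the first term by $\|\X^T\nabla\Phi_\Y(\X,\W^*)\|_2\,\|\Delta\|_*$ (trace/operator-norm duality) and invoking the hypothesis $2\|\X^T\nabla\Phi_\Y(\X,\W^*)\|_2\le\lambda$ together with decomposability of $\|\cdot\|_*$ around the rank-$\rk(\W^*)$ matrix $\W^*$, nonnegativity of $D$ forces the cone condition $\|\Delta_{\mathcal{A}^\perp}\|_* \le 3\|\Delta_{\mathcal{A}}\|_*$, where $\mathcal{A}$ is spanned by the leading $\rk(\W^*)$ left/right singular vectors of $\W^*$; hence $\|\Delta\|_* \le 4\sqrt{2\,\rk(\W^*)}\,\|\Delta\|_F$ and $D \le \tfrac{3\lambda}{2}\sqrt{2\,\rk(\W^*)}\,\|\Delta\|_F$.

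Next I would lower-bound $D$. Feasibility of $\hat{\W}$ gives $\|\X\hat{\W}\|_\infty\le\gamma$, and $\|\X\W^*\|_\infty\le\gamma$ by hypothesis, so every canonical parameter appearing in $D$ lies in $[-\gamma,\gamma]$, and Assumption~\ref{assume:G}.1 via \eqref{eqn:bregsqrel} gives $D \ge \tfrac{\underline{\sigma}_\gamma^2}{2}\,\|\X\Delta\|_\Omega^2$ with $\|M\|_\Omega^2 := \frac1{|\Omega|}\sum_{(i,j)\in\Omega} M_{ij}^2$. The crux is a restricted-strong-convexity (RSC) step converting $\|\X\Delta\|_\Omega^2$ into a genuine Frobenius norm: in expectation Assumption~\ref{assume:omegadist}.1 yields $\E\|M\|_\Omega^2 \ge \tfrac1{\mu nL}\|M\|_F^2$, and to make this uniform over $\{\X\Delta : \|\Delta\|_* \le 4\sqrt{2\,\rk(\W^*)}\|\Delta\|_F,\ \|\X\Delta\|_\infty\le 2\gamma\}$ I would symmetrize, apply the contraction principle to linearize $t\mapsto t^2$, bound the arising Rademacher average by the $\|\cdot\|_*$-radius times $\E\|\Sigma_R\|_2$ (with $\Sigma_R$ as in \eqref{eqn:sigmar}), and peel over dyadic scales of $\|\Delta\|_F$ to handle non-homogeneity; a Bousquet/Massart deviation bound supplies the $1-2(n+L)^{-1}$ probability. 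On the resulting good event, either $\|\Delta\|_F^2/(dL)$ is already of the lower-order size $\tfrac{\gamma^2}{\mu}\sqrt{\log(n+L)/|\Omega|}$ (times absolute constants), or $\|\X\Delta\|_\Omega^2 \ge \tfrac1{2\mu nL}\|\X\Delta\|_F^2 - c\,\rk(\W^*)\,d\,(\E\|\Sigma_R\|_2)^2\|\Delta\|_F^2$. Combining with $\|\X\Delta\|_F^2 \ge \sigma_{\min}^2(\X)\|\Delta\|_F^2$ reduces everything to an inequality in the scalar $\|\Delta\|_F$.

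Finally, in the non-lower-order regime I would chain the two bounds,
\[ \tfrac{\underline{\sigma}_\gamma^2}{2}\Big(\tfrac{\sigma_{\min}^2(\X)}{2\mu nL} - c\,\rk(\W^*)\,d\,(\E\|\Sigma_R\|_2)^2\Big)\|\Delta\|_F^2 \ \le\ D \ \le\ \tfrac{3\lambda}{2}\sqrt{2\,\rk(\W^*)}\,\|\Delta\|_F, \]
solve for $\|\Delta\|_F$, square, and divide by $dL$: the $\lambda$-term contributes $\tfrac{C\mu^2 n}{\sigma_{\min}^2(\X)\,d}\,L\,\rk(\W^*)\,\tfrac{\lambda^2}{\underline{\sigma}_\gamma^4}\tfrac{n}{\sigma_{\min}^2(\X)}$, the RSC slack contributes $\tfrac{C\mu^2 n}{\sigma_{\min}^2(\X)\,d}\,L\,\rk(\W^*)\,d\,(\E\|\Sigma_R\|_2)^2$, and the peeling fallback contributes the second entry of the $\max$; taking the maximum over the two regimes yields the claimed bound. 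The main obstacle is the RSC step of the second paragraph: uniformly controlling $\sup_\Delta\big|\|\X\Delta\|_\Omega^2 - \E\|\X\Delta\|_\Omega^2\big|$ over the cone with the $\ell_\infty$ truncation, carrying out the peeling over $\|\Delta\|_F$-scales, and bounding $\E\|\Sigma_R\|_2$ by non-commutative Bernstein under the non-uniform sampling of Assumption~\ref{assume:omegadist}. This is precisely the part that upgrades the analysis of \citet{lafond2015} from a free $n\times L$ parameter to the inductive matrix $\X\W$, and where the factors $\mu$, $\sigma_{\min}(\X)$, and the additional $n/\sigma_{\min}^2(\X)$ all originate. (The gradient bound $2\|\X^T\nabla\Phi_\Y(\X,\W^*)\|_2 \le \lambda$ needed to instantiate the theorem for specific losses is Lemma~\ref{lem:gradbound}; here it is a hypothesis.) The remaining ingredients --- the basic inequality, the decomposability/cone argument, and the closing algebra --- are routine.
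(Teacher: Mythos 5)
Your plan follows essentially the same route as the paper's proof: the basic inequality plus trace-norm duality and the decomposability/cone argument (the paper's Lemma~\ref{lem:lafondlemma17}) give $D^{\Omega}_G \lesssim \lambda\sqrt{\rk(\W^*)}\,\|\Delta\|_F$, Assumption~\ref{assume:G}.1 converts this to $\Delta^2_{\Y}$, and the case split on whether $\E[\Delta^2_{\Y}]$ exceeds $\beta \asymp \gamma^2\sqrt{\log(n+L)/|\Omega|}$ together with the restricted-strong-convexity/peeling step (which the paper imports wholesale as Lemmas 18--19 of \citet{lafond2015} rather than re-deriving) and the $\sigma_{\min}(\X)$ conversion from $\|\X\Delta\|_F$ to $\|\Delta\|_F$ yields the claimed bound. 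The only cosmetic difference is that the paper's RSC slack $512e(\E\|\Sigma_R\|_2)^2\mu dL\rk(\W^*)$ enters additively rather than as a multiplicative deficit on $\|\Delta\|_F^2$, but the closing algebra is the same.
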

\begin{proof}
The proof closely follows that of Theorem 5 of \citet{lafond2015}. As $\hat{\W}$ is the minimizer of \eqref{eqn:MLE}, we have: 
\[ \Phi^\lambda_Y(X, \hat{\W}) - \Phi^\lambda_Y(X, \W^*) \leq 0 \]
It follows that:
\begin{eqnarray*}
 \lambda (\|\hat{W}\|_{*} - \|W^*\|_{*})  + \frac{1}{|\Omega|}\sum_{(i,j) \in \Omega}  y_{ij}\inner{\x_i}{\w^*_j - \hat{\w}_j} + G(\inner{\x_i}{\hat{\w}_j}) - G(\inner{\x_i}{\w^*_j})  \leq 0
\end{eqnarray*}
Using the fact that the gradient matrix:
\begin{equation}
  \nabla \Phi_Y(X, \W^*)  := \nabla_{X\W^*} \Phi_Y(X, \W^*) = -\frac{1}{|\Omega|}\sum_{(i,j) \in \Omega}\big(y_{ij} - G'(\inner{\x_i}{\w_j^*})E_{ij}
\label{eqn:gradient}
\end{equation}
(where $E_{ij}$ are the indicator matrices defined earlier) in the above inequality, we have:
\begin{align*}
 \lambda (\|\hat{\W}\|_{*} - \|\W^*\|_{*})  + \binner{\nabla \Phi_Y(X, \W^*)}{\X(\W^* - \hat{\W})} +\\ \frac{1}{|\Omega|}\sum_{(i,j)\in \Omega} G(\inner{\x_i}{\hat{\w}_j}) - G(\inner{\x_i}{\w_j^*}) - G'(\inner{\x_i}{\w_j^*})(\inner{\x_i}{\hat{\w}_j - \w_j^*})  \leq 0.
\end{align*}
Using the definition of the divergence \eqref{eqn:bregdiv}, and the fact that $\binner{\nabla \Phi_Y(X, \W^*)}{\X(\W^* - \hat{\W})} =  \binner{\X^T \nabla \Phi_Y(X, \W^*)}{\W^* - \hat{\W}}$ it follows that:
\[ D^{\Omega}_G(\X\hat{\W}, \X\W^*) := \frac{1}{|\Omega|} \sum_{(i,j) \in \Omega} d_G(\inner{\x_i}{\hat{\w}_j}, \inner{\x_i}{\w^*_j}) \leq  \lambda (\|\W^*\|_{*} - \hat{\W}\|_{*}) -  \binner{\X^T \nabla \Phi_Y(X, \W^*)}{\W^* - \hat{\W}}\]
The first term in the RHS of above inequality can be bounded first using Lemma 16-(iii) of \citet{lafond2015}. The second term can be bounded using the trace inequality (that uses the duality between $\|.\|_*$ and $\|.\|_2$) and the assumption on $\lambda$ stated in the Theorem. We get:
\[ D^{\Omega}_G(\X\hat{\W}, \X\W^*)  \leq  \lambda (\|\mP_{\W^*}(\hat{\W} - \W^*)\|_* + \frac{1}{2} \| \hat{\W} - \W^*\|_*). \]
To bound the first term in the above equation, we can apply Lemma 16-(ii) of \citet{lafond2015}. Lemma \ref{lem:lafondlemma17} gives a bound for the second term. Together we have:
\begin{equation} 
D^{\Omega}_G(\X\hat{\W}, \X\W^*)  \leq  3 \lambda \sqrt{2 \rk(\W^*)} \|\hat{\W} - \W^* \|_F. 
\label{eqn:upperbndD}
\end{equation}
By strong convexity of $G$ (Assumption \ref{assume:G}.1), we have:
\begin{equation}
\Delta_{\Y}^2(\X\hat{\W},\X\W^*) := \frac{1}{|\Omega|}\sum_{(i,j)\in \Omega}(\inner{\x_i}{\hat{\w}_j - \w^*_j})^2 \leq \frac{2}{\underline{\sigma}_{\gamma}^2} D^{\Omega}_G(\X\hat{\W},\X\W^*).
\label{eqn:DeltaDrel}
\end{equation}
Now, we will get a lower bound for $\Delta_Y^2(\X\hat{\W},\X\W^*)$. To do so, let us define $\beta := 8 e \gamma^2 \sqrt{\log(n + L)/|\Omega|}$ and distinguish the two following cases:
\paragraph{Case 1} If $\E[(\inner{\x_i}{\hat{\w}_j - \w^*_j})^2] \leq \beta$, where $\E$ is defined wrt the sampling distribution as in Assumption \ref{assume:omegadist}, then Lemma 18 of \citet{lafond2015} yields,
\begin{equation}
\frac{\|\X\hat{\W} - \X \W^*\|_F^2}{nL} \leq \mu \beta.
\label{eqn:case1}
\end{equation}
\paragraph{Case 2} If $\E[(\inner{\x_i}{\hat{\w}_j - \w^*_j})^2] > \beta$, consider $\hat{\W} \in \mathcal{C}(\beta, 32\mu d L\rk(\W^*))$, where $\mathcal{C}(.,.)$ is defined as:
\begin{equation}
\mathcal{C}(\beta, r) = \bigg\{ \W \in \mathbb{R}^{d \times L} \ |\  \|\W^* - \hat{\W}\|_* \leq \sqrt{r \E[\Delta_{\Y}^2(\X\W, \X\W^*)]};  \E[\Delta_{\Y}^2(\X\W, \X\W^*)] > \beta \bigg\}.
\label{eqn:cset}
\end{equation}
Then, from Lemma 19 of \citet{lafond2015}, it holds with probability at least $1 - 2(n+L)^{-1}$ that
\begin{equation}
\Delta_{\Y}^2(\X\hat{\W}, \X\W^*) \geq \frac{1}{2}  \E[\Delta_{\Y}^2(\X\hat{\W}, \X\W^*)] - 512 e (\E[\|\Sigma_R\|_2)^2\mu d L \rk(\W^*).
\end{equation}
Combining the above inequality with \eqref{eqn:DeltaDrel}, \eqref{eqn:upperbndD} and Lemma 18 of \citet{lafond2015} yields:
\[ \frac{\|\X\hat{\W} - \X\W^* \|_F^2}{2\mu nL} - 512e (\E[\|\Sigma_R\|_2)^2\mu d L \rk(\W^*) \leq \frac{6 \lambda}{\underline{\sigma}_\gamma^2}\sqrt{2 \rk(\W^*) } \|\hat{\W} - \W^*\|_F .\] 
We can use Lemma \eqref{lem:finallowerbound} to bound the first term from below. Applying the identity $ab \leq (a^2 + b^2)/4$, multiplying both sides of the inequality by $1/d$, rearranging and combining with \eqref{eqn:case1}, the proof is complete.
\end{proof}

\begin{theorem}
Assume \ref{assume:features}, \ref{assume:omegadist}, \ref{assume:G}. Choose, $n \geq C' \ . \ d$, $L \geq d$, $|\Omega| \geq L + d$ 
and
$\lambda =  \frac{2c\bar{\sigma}_{\gamma}}{\sqrt{|\Omega|}}$.
Then, with probability at least $1 - 3(n + L)^{-1} - 2(d+L)^{-1}$, the following holds:
\[ \frac{\|\hat{\W} - \W^*\|^2_F}{dL} \leq \frac{C_2\mu^2}{d} \max\bigg(\frac{L \rk(\W^*)\log (n+L)}{|\Omega|} \bigg(\frac{\bar{\sigma}_\gamma^2}{\underline{\sigma}_\gamma^4} +1 \bigg), \frac{\gamma^2}{\mu}\sqrt{\frac{\log(n + L)}{|\Omega|}}\bigg), \]
where $c,C', C_2$ are numerical constants.
\label{thm:corollary}
\end{theorem}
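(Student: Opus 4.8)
} The plan is to obtain Theorem~\ref{thm:corollary} as a specialization of the oracle-type bound of Theorem~\ref{thm:main1}, whose hypotheses I discharge one by one using Assumptions~\ref{assume:features}, \ref{assume:omegadist}, \ref{assume:G} together with the stated choices $n \geq C'd$, $L \geq d$, $|\Omega| \geq L+d$ and $\lambda = 2c\bar\sigma_\gamma/\sqrt{|\Omega|}$. Recall that Theorem~\ref{thm:main1} gives, on an event of probability at least $1 - 2(n+L)^{-1}$,
\[ \frac{\|\hat{\W} - \W^*\|_F^2}{dL} \leq \frac{C\mu^2 n}{\sigma_{\min}^2(\X)\, d} \max\!\left( L\,\rk(\W^*)\Big( \frac{\lambda^2}{\underline\sigma_\gamma^4}\frac{n}{\sigma_{\min}^2(\X)} + d\,\big(\E\|\Sigma_R\|_2\big)^2 \Big),\ \frac{\gamma^2}{\mu}\sqrt{\frac{\log(n+L)}{|\Omega|}} \right), \]
provided $\|\X\W^*\|_\infty \leq \gamma$ (assumed), $\sigma_{\min}(\X) > 0$, and $2\|\X^T\nabla\Phi_\Y(\X,\W^*)\|_2 \leq \lambda$. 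So the task reduces to: (i) verifying the gradient hypothesis for this $\lambda$; (ii) lower-bounding $\sigma_{\min}(\X)$ (and upper-bounding $\sigma_{\max}(\X)$); (iii) bounding $\E\|\Sigma_R\|_2$; then substituting and simplifying.

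For (i) I would invoke Lemma~\ref{lem:gradbound}: its structural hypotheses ($d \leq L$, $|\Omega| \geq L+d$, conditional independence of the $y_{ij}$, sub-Gaussian rows of $\X$) are exactly what is assumed here. In the bounded case one takes $\alpha$ an absolute constant; in general, Assumption~\ref{assume:G}.2 makes the residuals $y_{ij} - G'(\inner{\x_i}{\w^*_j})$ sub-exponential with parameter $\delta_\gamma$, so a union bound over the $|\Omega|$ observed entries puts $\max_{ij}|y_{ij} - G'(\inner{\x_i}{\w^*_j})|$ within a logarithmic factor of $\delta_\gamma \asymp \bar\sigma_\gamma$ on a further high-probability event; Lemma~\ref{lem:gradbound} then yields $\|\X^T\nabla\Phi_\Y(\X,\W^*)\|_2 \leq c\bar\sigma_\gamma/\sqrt{|\Omega|} = \lambda/2$. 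For (ii), since the rows of $\X$ are i.i.d.\ sub-Gaussian with covariance $\Sigma$ and $n \geq C'd$, standard non-asymptotic random-matrix theory gives $\sigma_{\min}^2(\X) \gtrsim n$ and $\sigma_{\max}^2(\X) \lesssim n$ on an event of probability at least $1 - (d+L)^{-1}$ once $C'$ is large enough, absorbing the condition number of $\Sigma$ into the numerical constants. For (iii), $\Sigma_R$ is a Rademacher-weighted sum of $|\Omega|$ coordinate matrices drawn from $\pi$; Assumption~\ref{assume:omegadist}.2 bounds the row/column marginals of $\pi$ by $\nu/\min(n,L)$, so matrix Bernstein (as used by \citet{lafond2015}) gives $\E\|\Sigma_R\|_2 \lesssim \sqrt{\nu\log(n+L)/(\min(n,L)|\Omega|)} + \log(n+L)/|\Omega|$, with the first term dominating for $|\Omega| \gtrsim \min(n,L)$.

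Substituting: $n/\sigma_{\min}^2(\X) \asymp 1$, hence $\frac{\lambda^2}{\underline\sigma_\gamma^4}\frac{n}{\sigma_{\min}^2(\X)} \asymp \frac{\bar\sigma_\gamma^2}{\underline\sigma_\gamma^4|\Omega|}$, while $d\,(\E\|\Sigma_R\|_2)^2 \lesssim \frac{d\nu\log(n+L)}{\min(n,L)|\Omega|} \lesssim \frac{\nu\log(n+L)}{|\Omega|}$ since $\min(n,L) \geq d$ (as $L \geq d$, $n \geq C'd$). Thus the first argument of the max is at most $\frac{L\,\rk(\W^*)\log(n+L)}{|\Omega|}\big(\frac{\bar\sigma_\gamma^2}{\underline\sigma_\gamma^4} + 1\big)$ up to constants, the prefactor $\frac{C\mu^2 n}{\sigma_{\min}^2(\X)d}$ collapses to $\frac{C_2\mu^2}{d}$, and a union bound over the failure events of Theorem~\ref{thm:main1}, Lemma~\ref{lem:gradbound}, and the $\X$-concentration step produces the stated probability $1 - 3(n+L)^{-1} - 2(d+L)^{-1}$; the second argument of the max is already in the claimed form, giving Theorem~\ref{thm:corollary}.

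The main obstacle is step~(i): getting the \emph{tight} $O(\bar\sigma_\gamma/\sqrt{|\Omega|})$ bound on $\|\X^T\nabla\Phi_\Y(\X,\W^*)\|_2$ rather than a loose one carrying an extra $\sqrt{n/|\Omega|}$ or $\sigma_{\max}(\X)/\sigma_{\min}(\X)$ factor — precisely the slack that makes a direct appeal to \citet{lafond2015} non-consistent in the regime $n \geq L$, $|\Omega| = O(n)$ — so the proof of Lemma~\ref{lem:gradbound}, carefully handling the interaction of the sub-Gaussian design with the sampling operator and the exponential-family noise, is the crux; a secondary nuisance is making the unbounded-noise truncation rigorous so the logarithmic factors appear only where Theorem~\ref{thm:corollary} records them.
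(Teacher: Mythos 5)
Your proposal follows essentially the same route as the paper's proof: it specializes Theorem~\ref{thm:main1} by discharging its hypotheses via Lemma~\ref{lem:gradbound} (gradient condition for the chosen $\lambda$), a sub-Gaussian concentration bound giving $\sigma_{\min}^2(\X) \gtrsim n$ (the paper's Lemma~\ref{lem:sigmaxbound}), and the Lafond-style bound $\E\|\Sigma_R\|_2 \lesssim \sqrt{\nu\log(n+L)/(\min(n,L)|\Omega|)}$, then simplifies using $\min(n,L) \geq d$. This matches the paper's argument step for step, and your observation that the residual bound $\alpha$ in Lemma~\ref{lem:gradbound} must be tied to $\bar{\sigma}_\gamma$ (immediate for the bounded logistic case, requiring Assumption~\ref{assume:G}.2 and a truncation in general) is a point the paper itself leaves implicit.
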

\begin{proof}
It suffices to show $2\|\X^T\nabla \Phi(\X,\W^*)\|_2 \leq \lambda$ for chosen $\lambda$ in the statement of the Theorem and a suitable bound for $\E \|\Sigma_R\|_2$ (the result would then follow by applying Theorem \ref{thm:main1}). The latter term can be readily bounded applying the corresponding arguments in the proof of Theorem 6 of \citet{lafond2015}, which yields:
\begin{equation}
\E\|\Sigma_R\|_2 \leq c^*\sqrt{\frac{2e\log(n+L)}{|\Omega|}\bigg(\frac{\nu}{\min(n,L)}\bigg)},
\label{eqn:sigmarbound}
\end{equation}
where we use the fact that $\sum_{l=1}^L \pi_{k,l} = \frac{\nu}{\min(n,L)}$ (by Assumption \ref{assume:omegadist}).
where $c^*$ is a numerical constant. \\
We can apply Lemma \ref{lem:gradbound} to bound  $\|\X^T\nabla \Phi(\X,\W^*)\|_2$, with the $\lambda$ chosen in the statement of the Theorem. The proof is complete noting that for the choice of $n$ as in the statement of the Theorem, Lemma \ref{lem:sigmaxbound} implies $\sigma_{\min}^2(X) \geq \underline{C} n$ and that for the choice of $n$ and $L$ as in the statement of the Theorem, $\frac{d}{\min(n,L)} \leq 1$.

\end{proof}

\begin{lemma}
Let $\X\W, \X\tilde{\W} \in \mathbb{R}^{n \times L}$ satisfy $\|\X\W\|_\infty \leq \gamma$ and $\|X\tilde{\W}\|_\infty \leq \gamma$. Assume: \\ $2\|\X^T\nabla\Phi_\Y(\X, \tilde{\W})\|_2 \leq \lambda$, and $\Phi^\lambda_Y(X, \W) \leq \Phi^\lambda_Y(X, \tilde{\W})$. Then:\\
(i) $\|\mP_{\tilde{\W}}^{\perp}(\W - \tilde{\W})\|_* \leq 3 \|\mP_{\tilde{\W}}(\W - \tilde{\W})\|_*$, \\
(ii) $\|\W - \tilde{\W}\|_* \leq 4\sqrt{2\rk(\tilde{\W})}\|\W - \tilde{\W}\|_F$.
\label{lem:lafondlemma17}
\end{lemma}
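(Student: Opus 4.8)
The plan is to run the standard cone-condition argument for trace-norm penalized $M$-estimators; this is the inductive analogue of Lemma~17 of \citet{lafond2015}, the only twist being that here the penalty sits on $\W$ rather than on $\X\W$, so the relevant operator-norm hypothesis is the one on $\X^T\nabla\Phi_Y(\X,\tilde{\W})$.

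\textbf{Step 1 (turn optimality into a scalar trace-norm inequality).} The hypothesis $\Phi^\lambda_Y(X,\W)\le\Phi^\lambda_Y(X,\tilde{\W})$ rearranges to $\lambda(\|\W\|_*-\|\tilde{\W}\|_*)\le\Phi_Y(X,\tilde{\W})-\Phi_Y(X,\W)$. Since $G$ is convex on $[-\gamma,\gamma]$ (Assumption~\ref{assume:G}.1) and both $\X\W$ and $\X\tilde{\W}$ are entrywise bounded by $\gamma$ by hypothesis, $\Phi_Y(\X,\cdot)$ is convex along the segment joining $\tilde{\W}$ and $\W$, so the first-order inequality gives $\Phi_Y(X,\W)-\Phi_Y(X,\tilde{\W})\ge\inner{\nabla\Phi_Y(X,\tilde{\W})}{\X(\W-\tilde{\W})}=\inner{\X^T\nabla\Phi_Y(X,\tilde{\W})}{\W-\tilde{\W}}$, using the matrix inner product $\inner{A}{B}=\textbf{trace}(A^TB)$. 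Chaining these, bounding the right-hand side by trace/operator-norm duality, and invoking $2\|\X^T\nabla\Phi_Y(X,\tilde{\W})\|_2\le\lambda$ yields
\[ \|\W\|_* - \|\tilde{\W}\|_* \;\le\; \tfrac{1}{2}\,\|\W - \tilde{\W}\|_* . \]

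\textbf{Step 2 (decomposability $\Rightarrow$ part (i)).} Let $\Delta:=\W-\tilde{\W}$ and split $\Delta=\mP_{\tilde{\W}}(\Delta)+\mP_{\tilde{\W}}^{\perp}(\Delta)$, where $\mP_{\tilde{\W}}^{\perp}(\Delta)$ has column and row spaces orthogonal to those of $\tilde{\W}$. Then $\|\tilde{\W}+\mP_{\tilde{\W}}^{\perp}(\Delta)\|_*=\|\tilde{\W}\|_*+\|\mP_{\tilde{\W}}^{\perp}(\Delta)\|_*$, hence by the triangle inequality $\|\W\|_*\ge\|\tilde{\W}\|_*+\|\mP_{\tilde{\W}}^{\perp}(\Delta)\|_*-\|\mP_{\tilde{\W}}(\Delta)\|_*$. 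Feeding this into the Step~1 inequality together with $\|\Delta\|_*\le\|\mP_{\tilde{\W}}(\Delta)\|_*+\|\mP_{\tilde{\W}}^{\perp}(\Delta)\|_*$, a one-line rearrangement gives $\|\mP_{\tilde{\W}}^{\perp}(\Delta)\|_*\le3\|\mP_{\tilde{\W}}(\Delta)\|_*$, which is (i). For (ii): by (i), $\|\Delta\|_*\le4\|\mP_{\tilde{\W}}(\Delta)\|_*$; since $\mP_{\tilde{\W}}(\Delta)$ has rank at most $2\rk(\tilde{\W})$, Cauchy--Schwarz on singular values gives $\|\mP_{\tilde{\W}}(\Delta)\|_*\le\sqrt{2\rk(\tilde{\W})}\,\|\mP_{\tilde{\W}}(\Delta)\|_F\le\sqrt{2\rk(\tilde{\W})}\,\|\Delta\|_F$ (the last step because $\mP_{\tilde{\W}}$ is an orthogonal projection in Frobenius norm), so $\|\Delta\|_*\le4\sqrt{2\rk(\tilde{\W})}\,\|\Delta\|_F$.

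I do not expect a genuine obstacle; the only points needing care are bookkeeping: (a) passing the gradient through the design matrix via $\inner{\nabla\Phi_Y(X,\tilde{\W})}{\X\Delta}=\inner{\X^T\nabla\Phi_Y(X,\tilde{\W})}{\Delta}$ so that the operator-norm hypothesis couples exactly with $\|\Delta\|_*$, and (b) the two classical facts about the trace-norm subgradient used in Step~2, namely the additivity $\|\tilde{\W}+\mP_{\tilde{\W}}^{\perp}(\Delta)\|_*=\|\tilde{\W}\|_*+\|\mP_{\tilde{\W}}^{\perp}(\Delta)\|_*$ and the rank bound $\rk(\mP_{\tilde{\W}}(\Delta))\le2\rk(\tilde{\W})$; both follow from the thin SVD of $\tilde{\W}$ exactly as in \citet{lafond2015}.
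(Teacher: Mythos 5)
Your proof is correct and follows essentially the same route as the paper's: convexity of $\Phi_Y$ plus trace/operator-norm duality to exploit the hypothesis $2\|\X^T\nabla\Phi_\Y(\X,\tilde{\W})\|_2\le\lambda$, then decomposability of the trace norm to get the cone condition (i), and the rank bound $\rk(\mP_{\tilde{\W}}(\Delta))\le 2\rk(\tilde{\W})$ with Cauchy--Schwarz for (ii). The only difference is bookkeeping order (you apply the gradient bound before the decomposition, the paper after), and your version even fixes a sign typo in the paper's displayed decomposability inequality.
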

\begin{proof}
The proof closely follows that of Lemma 17 of~\citep{lafond2015}. By definition, we have:
\[ \Phi^\lambda_Y(\X, \W) - \Phi^\lambda_Y(\X, \tilde{\W}) \leq 0 \]
or,
\[  \Phi_Y(\X, \W) - \Phi_Y(\X, \tilde{\W}) \leq \lambda(\| \tilde{\W} - \W\|_{*} ) \ .\]
Writing $\W \in \mathbb{R}^{d \times L}$ as $\W = \tilde{\W} + \mathcal{P}_{\tilde{\W}}^{\perp}(\W - \tilde{\W}) + \mathcal{P}_{\tilde{\W}}(\W - \tilde{\W})$, Lemma 16-(i) of~\citep{lafond2015} and triangle inequality together give:
\[ \|\W\|_{*} \geq  \|\tilde{\W}\|_{*} + \|\mathcal{P}_{\tilde{\W}}^{\perp}(\W - \tilde{\W}) \|_{*}+ \|\mathcal{P}_{\tilde{\W}}(\W - \tilde{\W})\|_{*}, \]
Or,
\begin{equation}
\Phi_Y(\X, \tilde{\W}) - \Phi_Y(\X, \W) \geq \lambda(\|\mathcal{P}_{\tilde{\W}}^{\perp}(\W - \tilde{\W}) \|_{*}+ \|\mathcal{P}_{\tilde{\W}}(\W - \tilde{\W})\|_{*} ) \ . 
\label{eqn:interstep1}
\end{equation}
Note that by convexity of $\Phi_{Y}$:
\[ \Phi_Y(\X, \tilde{\W}) - \Phi_Y(\X, \W) \leq \binner{\nabla \Phi_{Y}(\X, \tilde{\W})}{\X\tilde{\W} - \X\W} = \binner{\X^{T}\nabla \Phi_{Y}(\X, \tilde{\W})}{\tilde{\W} - \W},  \]
By trace inequality, we have: 
\[ \Phi_Y(\X, \tilde{\W}) - \Phi_Y(\X, \W) \leq \|\X^{T}\nabla \Phi_{Y}(\X, \tilde{\W})\|_{2}\|\tilde{\W} - \W\|_{*} \leq \frac{\lambda}{2}\|\tilde{\W} - \W\|_{*} \]
where the last inequality is by assumption, $\|\X^{T}\nabla \Phi_{Y}(\X, \tilde{\W})\|_{2} \leq \lambda/2$. The last term in the above inequality can be bounded by $\frac{\lambda}{2}\bigg(\|\mathcal{P}_{\tilde{\W}}^{\perp}(\W - \tilde{\W})\|_{*} +\| \mathcal{P}_{\tilde{\W}}(\W - \tilde{\W}) \|_{*}\bigg)$. Together with \eqref{eqn:interstep1}, we get the first part of the Lemma. We can now conclude the proof of part two using identical arguments as in Lemma 17 of~\citep{lafond2015}.
\end{proof}

\begin{lemma}
Let $\sigma_{\min}(\X)$ denote the smallest singular value of $\X$. Then for any $\W, \tilde{\W}$, Then:
\[ \|\X\W - \X\tilde{\W} \|_F^2 \geq \sigma^2_{\min}(\X) \|\W - \tilde{\W} \|_F^2. \]
\label{lem:finallowerbound}
\end{lemma}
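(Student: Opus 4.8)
The plan is to reduce the matrix-valued inequality to a collection of vector inequalities, one per column, and then invoke the Rayleigh-quotient (Courant--Fischer) lower bound for the smallest eigenvalue of $\X^\top\X$.

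First I would set $\Delta := \W - \tilde{\W} \in \mathbb{R}^{d \times L}$ and expand it column-wise, $\Delta = [\delta_1, \dots, \delta_L]$ with each $\delta_k \in \mathbb{R}^d$. Since the Frobenius norm decomposes over columns and left-multiplication by $\X$ acts column-wise, we have $\|\X\Delta\|_F^2 = \sum_{k=1}^L \|\X\delta_k\|_2^2$ and $\|\Delta\|_F^2 = \sum_{k=1}^L \|\delta_k\|_2^2$. Next, for each fixed $k$ I would bound $\|\X\delta_k\|_2^2 = \delta_k^\top (\X^\top \X)\, \delta_k \ge \lambda_{\min}(\X^\top\X)\,\|\delta_k\|_2^2 = \sigma_{\min}^2(\X)\,\|\delta_k\|_2^2$, using that $\X^\top\X$ is symmetric positive semidefinite and that its eigenvalues are precisely the squared singular values of $\X$. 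Summing this over $k = 1, \dots, L$ yields $\|\X\Delta\|_F^2 \ge \sigma_{\min}^2(\X)\,\|\Delta\|_F^2$, which is exactly the claim after substituting back $\Delta = \W - \tilde{\W}$.

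There is no real obstacle here; the argument is entirely elementary. The only step meriting a moment's care is the identification $\lambda_{\min}(\X^\top\X) = \sigma_{\min}^2(\X)$, which is the intended reading when $\X$ has full column rank --- equivalently $n \ge d$ with independent columns, as is guaranteed under the hypotheses of Theorem~\ref{thm:logisticwithfeatures} (where $n \ge C' d$). In the degenerate regime the right-hand side vanishes and the inequality is trivially true, so the statement holds without further qualification.
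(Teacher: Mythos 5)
Your proof is correct and is essentially the same argument as the paper's: the paper bounds $\mathrm{trace}\big((\W-\tilde{\W})(\W-\tilde{\W})^T\X^T\X\big)$ from below by $\sigma_{\min}(\X^T\X)$ times the trace, which is precisely your column-wise Rayleigh-quotient bound summed over columns. Your remark that the inequality is vacuous (but still true) when $\X$ is column-rank-deficient is a fair observation that the paper leaves implicit.
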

\begin{proof}
Observe that $\|\X(\W - \tilde{\W}) \|_F^2 = \textbf{trace}\big(\X(\W - \tilde{\W})(\W - \tilde{\W})^T\X^T\big) = \textbf{trace}\big((\W - \tilde{\W})(\W - \tilde{\W})^T\X^T\X\big) \geq \sigma_{\min}(\X^T\X)\textbf{trace}\big((\W - \tilde{\W})(\W - \tilde{\W})^T\big) = \sigma_{\min}(\X)^2 \|\W-\tilde{\W}\|_F^2$.
\end{proof}

\begin{lemma}
Let $\X \in \mathbb{R}^{n \times d}$ be a matrix with rows sampled from sub-Gaussian distribution satisfying Assumption \ref{assume:features}. Furthermore, choose:
\[ n \geq C' d \ .\]
 Then, with probability at least $1 - 2e^{-d}$, each of the following statements is true:
\[ \sigma_{\max}(\X^T \X) \leq \bar{C} n, \]
\[ \sigma_{\min}(\X^T \X) \geq \underline{C} n, \]
where $C', \bar{C}$ and $\underline{C}$ are absolute constants that depend only on the parameters $K$ and $\Sigma$ of the sub-Gaussian distribution.
\label{lem:sigmaxbound}
\end{lemma}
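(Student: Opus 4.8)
The plan is to reduce the statement to the standard non-asymptotic bound on the extreme singular values of a random matrix with independent, isotropic sub-Gaussian rows. First I would \emph{whiten} the rows. Under Assumption \ref{assume:features} the covariance $\Sigma$ is positive definite (implicitly so, since otherwise $\sigma_{\min}(\X^T\X)$ cannot grow linearly in $n$), so set $B := \X\Sigma^{-1/2} \in \mathbb{R}^{n\times d}$; its rows $\bb_i := \Sigma^{-1/2}\x_i$ are independent, isotropic, and sub-Gaussian with norm bounded by $K\,\|\Sigma^{-1/2}\|_2$, a quantity depending only on $K$ and $\Sigma$. Since $\X^T\X = \Sigma^{1/2} B^T B\, \Sigma^{1/2}$, a Courant--Fischer argument (using $v^T\Sigma^{1/2}B^TB\Sigma^{1/2}v = \|B\Sigma^{1/2}v\|_2^2$) gives
\[ \sigma_{\min}(\Sigma)\,\sigma_{\min}(B^TB) \;\leq\; \sigma_{\min}(\X^T\X), \qquad \sigma_{\max}(\X^T\X) \;\leq\; \|\Sigma\|_2\,\sigma_{\max}(B^TB), \]
so it suffices to control $s_{\min}(B)^2 = \sigma_{\min}(B^TB)$ and $s_{\max}(B)^2 = \sigma_{\max}(B^TB)$.

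Next I would invoke the standard two-sided bound on singular values of a matrix with independent isotropic sub-Gaussian rows: for $B\in\mathbb{R}^{n\times d}$ with $n$ such rows, for every $t\geq 0$, with probability at least $1 - 2\exp(-c t^2)$,
\[ \sqrt{n} - C_0\sqrt{d} - t \;\leq\; s_{\min}(B) \;\leq\; s_{\max}(B) \;\leq\; \sqrt{n} + C_0\sqrt{d} + t, \]
where $c, C_0$ depend only on the sub-Gaussian norm of the rows, hence only on $K$ and $\Sigma$. Choosing $t = \sqrt{d/c}$ makes the failure probability at most $2e^{-d}$ and turns the defect term into $C_0\sqrt{d} + t = C_1\sqrt{d}$. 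Then picking $C' \geq 4C_1^2$ (so that $C_1\sqrt{d} \leq \tfrac12\sqrt{C'd} \leq \tfrac12\sqrt{n}$) forces $\tfrac12\sqrt{n} \leq s_{\min}(B) \leq s_{\max}(B) \leq 2\sqrt{n}$, i.e. $\tfrac14 n \leq \sigma_{\min}(B^TB) \leq \sigma_{\max}(B^TB) \leq 4n$.

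Combining with the two-sided bound from the whitening step yields $\sigma_{\max}(\X^T\X) \leq 4\|\Sigma\|_2\,n =: \bar C n$ and $\sigma_{\min}(\X^T\X) \geq \tfrac14\sigma_{\min}(\Sigma)\,n =: \underline C n$, with $\bar C, C', \underline C$ depending only on $K$ and $\Sigma$ --- exactly the claim. I do not expect a deep obstacle here; the one point that needs care is the whitening reduction, namely verifying that $\Sigma$ is invertible and that the sub-Gaussian norm of $\bb_i = \Sigma^{-1/2}\x_i$ is still controlled by the problem parameters, so that $c, C_0$ above are genuine constants. Everything after that is a direct quotation of the random-matrix bound together with the choice of $C'$.
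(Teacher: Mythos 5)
Your proof is correct and follows essentially the same route as the paper: both invoke a standard non-asymptotic singular-value concentration bound for matrices with independent sub-Gaussian rows, set the failure probability to $2e^{-d}$ (equivalently $t \asymp \sqrt{d}$), and choose $C'$ large enough that the $\sqrt{dn}$ deviation terms are absorbed into a constant fraction of $\sigma_{\min}(\Sigma)\,n$. The only cosmetic difference is that you derive the anisotropic case from the isotropic Vershynin bound via whitening by $\Sigma^{-1/2}$ (which requires, as you note, the same implicit invertibility of $\Sigma$ the paper also needs), whereas the paper quotes a lemma that already handles general covariance directly.
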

\begin{proof}
Using Lemma 16 of \citet{bhatia2015}, we have for any $\delta > 0$, with probability at least $1 - \delta$, each of the following statements hold:
\[ \sigma_{\max}(\X^T\X) \leq \sigma_{\max}(\Sigma) \ . \ n + C_K \sqrt{dn} + t \sqrt{n}, \]
\[ \sigma_{\min}(\X^T\X) \geq \sigma_{\min}(\Sigma) \ . \ n - C_K \sqrt{dn} - t \sqrt{n}, \]
where $t = \sqrt{\frac{1}{c_K}\log \frac{2}{\delta}}$, and $c_K$, $C_K$ are absolute constants that depend only on the sub-Gaussian norm $K$ of the distribution $\bbP_{\mathcal{X}}$. Now, choosing $\delta = 2e^{-d}$ or $\log(2/\delta) = d$, we have:
\[ C_K \sqrt{dn} + t \sqrt{n} = C_K \sqrt{dn} +  \sqrt{\frac{1}{c_K}dn} = \sqrt{dn}\bigg(C_K + \sqrt{\frac{1}{c_k}}\bigg).\]
For ease, define $C'_K := C_K + \sqrt{\frac{1}{c_k}}$. Now, choosing $n \geq \big(\frac{C'_K}{\sigma_{\min}(\Sigma)}\big)^2 \ .\ d$, and substituting above we have:
\[ C_K \sqrt{dn} + t \sqrt{n} \leq \frac{1}{2}\sigma_{\min}(\Sigma) \ . n. \]
Therefore:
\[ \sigma_{\max}(\X^T \X) \leq \bigg(\sigma_{\max}(\Sigma) + \frac{1}{2} \sigma_{\min}(\Sigma)\bigg)n, \]
\[ \sigma_{\min}(\X^T \X) \geq  \frac{1}{2} \sigma_{\min}(\Sigma) \ . \ n. \]
The proof is complete.
\end{proof}

\subsection*{Proof of Lemma \ref{lem:gradbound}}
Let $H$ denote the matrix with $h_{ij} = y_{ij} - G'(\inner{\x_i}{\w^*_j})$. Let $\h^i$ denote the $i$th row of $H$. Let $\mathcal{P}_{\Omega}(H)$ denote the projection of $H$ onto the observed indices $\Omega$. Let $\Omega_i$ denote the observed indices in row $i$ of $\Y$. For a vector $\bv$, let $\bv_{\Omega_i}$ denote its projection onto the observed indices $\Omega_i$. 

Fix $\bu \in \mathbb{R}^d$ and $\bv \in \mathbb{R}^L$. Define $a_i = \x_i^T \bu$ and $b_i = \inner{\bv_{\Omega_i}}{\h_{\Omega_i}^i}$. We have:
\begin{align*}
 \frac{1}{|\Omega|} \bu^T \X^T \mathcal{P}_{\Omega}(H) \bv &= \frac{1}{|\Omega|} \sum_{i=1}^n a_i b_i \\
&= \frac{1}{|\Omega|} \sum_{i=1}^n \|\bv_{\Omega_i}\|_2\ .\ a_i \frac{b_i}{\|\bv_{\Omega_i}\|_2}.
\end{align*}
Consider $b_i = \sum_{(i,j) \in \Omega} v_j h_{ij}$. Note that $h_{ij}$'s are sub-Gaussian random variables with sub-Gaussian norm $\alpha$. Using Lemma 5.9 of \citet{vershynin2010introduction}, we have $b_i$ is sub-Gaussian with norm $\|\bv_{\Omega_i}\|_2\alpha$. In turn, this implies, $\frac{b_i}{\|\bv_{\Omega_i}\|_2}$ is sub-Gaussian with sub-Gaussian norm $\alpha$. Therefore, $\frac{a_ib_i}{\|\bv_{\Omega_i}\|_2}$ is $\alpha$-subexponential. Applying Proposition 5.16 of \citet{vershynin2010introduction}, we have, with probability at least $1 - \delta$,
\[ \frac{1}{|\Omega|} \sum_{i=1}^n  \|\bv_{\Omega_i}\|_2\ .\ a_i \frac{b_i}{\|\bv_{\Omega_i}\|_2} \leq \frac{c \ . \ \alpha}{|\Omega|}\bigg(\sqrt{\sum_{i=1}^n \|\bv_{\Omega_i}\|^2}\sqrt{\log \frac{2}{\delta}} + \max_{i \in [n]} \|\bv_{\Omega_i}\|^2 \log \frac{2}{\delta}  \bigg) \ .\]
for some absolute constant $c$. Noting that: $\|\bv\|_2 = 1$ and for any $j \in [L]$, $|\{i : (i,j) \in \Omega\}| \leq \frac{c' . |\Omega|}{L}$, we have, with probability at least $1 - \delta$,
\[ \frac{1}{|\Omega|} \sum_{i=1}^n  \|\bv_{\Omega_i}\|_2\ .\ a_i \frac{b_i}{\|\bv_{\Omega_i}\|_2} \leq \frac{c \ . \ \alpha}{|\Omega|}\bigg(\sqrt{\frac{c' . |\Omega|}{L}}\sqrt{\log \frac{2}{\delta}} + \log \frac{2}{\delta}  \bigg) \ .\]
We conclude the proof by a covering argument: Taking a union bound over $\epsilon$-ball of $\bu$ and $\bv$, we have, with probability at least $1 - (d+L)^{-1}$:
\[\big\| \X^T \nabla \Phi_Y(\X, \W^*) \big\|_2  \leq \frac{c \ . \ \alpha}{|\Omega|}\bigg(\sqrt{\frac{c' . |\Omega|}{L}}\sqrt{d + L} + d + L \bigg) \ .\]
Assuming $d \leq L$ and $|\Omega| \geq (L+d)$, the proof is complete.

\end{document}